\def\eqref#1{equation~\ref{#1}}
\def\1{\bm{1}}
\DeclareMathAlphabet{\mathsfit}{\encodingdefault}{\sfdefault}{m}{sl}
\SetMathAlphabet{\mathsfit}{bold}{\encodingdefault}{\sfdefault}{bx}{n}
\newcommand{\RNum}[1]{\uppercase\expandafter{\romannumeral #1\relax}}
\newcommand{\prox}{{\textrm{prox}}}
\DeclareMathOperator{\sech}{sech}
\DeclareMathOperator*{\arginf}{arg\:inf}
\newcommand{\differential}{{\rm{d}}}
\newtheorem{theorem}{Theorem}
\newtheorem{proposition}{Proposition}
\newtheorem{remark}{Remark}
\title{Proximal Mean Field Learning in Shallow Neural Networks}
\author{\name Alexis M.H. Teter \email amteter@ucsc.edu \\
      \addr Department of Applied Mathematics \\ University of California, Santa Cruz
      \AND
      \name Iman Nodozi \email inodozi@ucsc.edu\\
      \addr Department of Electrical and Computer Engineering\\
      University of California, Santa Cruz
      \AND
      Abhishek Halder \email ahalder@iastate.edu \\
      \addr Department of Aerospace Engineering\\
      Iowa State University
      }
\begin{document}

\maketitle

\begin{abstract}
We propose a custom learning algorithm for shallow over-parameterized neural networks, i.e., networks with single hidden layer having infinite width. The infinite width of the hidden layer serves as an abstraction for the over-parameterization. Building on the recent mean field interpretations of learning dynamics in shallow neural networks, we realize mean field learning as a computational algorithm, rather than as an analytical tool. Specifically, we design a Sinkhorn regularized proximal algorithm to approximate the distributional flow for the learning dynamics over weighted point clouds. In this setting, a contractive fixed point recursion computes the time-varying weights, numerically realizing the interacting Wasserstein gradient flow of the parameter distribution supported over the neuronal ensemble. An appealing aspect of the proposed algorithm is that the measure-valued recursions allow meshless computation. We demonstrate the proposed computational framework of interacting weighted particle evolution on binary and multi-class classification. Our algorithm performs gradient descent of the free energy associated with the risk functional.
\end{abstract}

\section{Introduction}
\label{sec:Intro}
While universal function approximation theorems for neural networks have long been known \citep{cybenko1989approximation,barron1993universal, hornik1989multilayer}, such guarantees do not account for the dynamics of the learning algorithms. Starting in 2018, several works \citep{mei2018mean,chizat2018global,rotskoff2018neural,sirignano2020mean,rotskoff2022trainability,boursier2022gradient} pointed out that the first order learning dynamics for shallow (i.e., single hidden layer) neural networks in the infinite width (i.e., over-parameterization) limit leads to a nonlinear partial differential equation (PDE) that depends on a pair of advection and interaction potentials. 

The Cauchy initial value problem associated with the PDE describes the evolution of neuronal parameter ensemble induced by the learning dynamics. This result can be interpreted as a dynamical version of the universal approximation theorem. In particular, the potentials depend on both the loss function as well as the activation functions of the neural network.  

The advection potential in this nonlinear PDE induces a drift, while the interaction potential induces a nonlocal force. Remarkably, this PDE can be interpreted as an infinite dimensional gradient flow of the population risk w.r.t. the Wasserstein metric arising from the theory of optimal mass transport \citep{villani2009optimal,villani2021topics}.

The nonlocal nonlinear PDE interpretation makes connection with the so-called ``propagation of chaos''--a term due to Kac \citep{kac1956foundations} that has grown into a substantial literature in statistical physics \citep{mckean1966class,sznitman1991topics,carmona2018probabilistic}. From this viewpoint, the first order algorithmic dynamics makes the individual neurons in the hidden layer behave as interacting particles. These particle-level or microscopic interactions manifest as a population-level or macroscopic gradient flow.

As an analytic tool, the mean field Wasserstein gradient flow interpretation helps shed light on the convergence of first order learning dynamics in over-parameterized networks. In this work, we propose Wasserstein proximal recursions to realize the mean field learning dynamics as a meshless computational algorithm.

\subsection{Computational challenges}\label{subsec:compchallenges}
Transcribing the mean field Wasserstein gradient flow PDE from an analytical tool to a computational algorithm is particularly challenging in the neural network context. This is because the derivation of the PDE in \citep{mei2018mean,chizat2018global,rotskoff2018neural,sirignano2020mean}, and the corresponding infinite dimensional gradient descent interpretation, is an asymptotic consistency result. Specifically, the PDE describes the time evolution of the joint population density (or population measure in general) for the hidden layer neuronal ensemble. By definition, this interpretation is valid in the mean field (infinite width) limit of the single hidden layer. In other words, to leverage the gradient flow PDE perspective in computation, the number of neurons in the hidden layer must be large.

However, from a numerical perspective, explicitly evolving the joint population in the large width regime is problematic. This is because the large width implies that the time-varying joint neuronal population distributions have high dimensional supports. Even though existing software tools routinely deploy stochastic gradient descent (SGD) algorithms at the \emph{microscopic} (i.e., particle) level, it is practically infeasible to estimate the time-varying population distributions using Monte Carlo or other \emph{a posteriori} function approximation algorithms near this limit. One also cannot resort to standard finite difference-type discretization approach for solving this gradient flow PDE because the large width limit brings the curse of dimensionality. Therefore, it is not obvious whether the mean field dynamics can lead to a learning algorithm in practice.

\subsection{Related works} 
Beyond mean field learning, Wasserstein gradient flows appear in many other scientific \citep{ambrosio2005gradient,santambrogio2017euclidean} and engineering \citep{halder2017gradient,caluya2021wasserstein,halder2022stochastic} contexts. Thus, there exists a substantial literature on numerically implementing the Wasserstein gradient flows -- both with grid \citep{peyre2015entropic,benamou2016augmented,carlier2017convergence,carrillo2022primal} and without grid \citep{liu2019understanding,caluya2019gradient,halder2020hopfield}. The latter class of algorithms are more relevant for the mean field learning context since the underlying parameter space (i.e., the support) is high dimensional. The proximal recursion we consider is closely related to the forward or backward discretization \citep{salim2020wasserstein,frogner2020approximate} of the  Jordan-Kinderlehrer-Otto (JKO) scheme \citep{jordan1998variational}. 

To bypass numerical optimization over the manifold of measures or densities, recent works \citep{mokrov2021large,alvarez2021optimizing,bunne2022proximal} propose using input convex neural networks \citep{amos2017input} to perform the Wasserstein proximal time-stepping by learning the convex potentials \citep{brenier1991polar} associated with the respective pushforward maps. To alleviate the computational difficulties in high dimensions, \citet{bonet2021sliced} proposes replacing the Wasserstein distance with the sliced-Wasserstein distance \citep{rabin2011wasserstein} scaled by the ambient dimension.

We note here that there exists extensive literature on the mean field limit of learning in neural networks from other perspectives, including the Neural Tangent Kernel (NTK) and the Gaussian process viewpoints, see e.g., \citep{jacot2018neural,lee2019wide,novak2019neural,xiao2018dynamical,li2018random,matthews2018gaussian}. Roughly speaking, the key observation is that in the infinite width limit, the learning evolves as a suitably defined Gaussian process with network architecture-dependent kernel. We mention this in the passing since in this work, we will only focus on the Wasserstein gradient flow viewpoint. We point out that unlike the NTK, the mean field limit in Wasserstein gradient flow viewpoint does not approximate dynamical nonlinearity. In particular, the associated initial value problem involves a nonlinear PDE (see (\ref{NN_PDE_reg})).

\subsection{Contributions} 


With respect to the related works referenced above, the main contribution of the present work is that we propose a meshless Wasserstein proximal algorithm that directly implements the macroscopic learning dynamics in a fully connected shallow network. We do so by evolving population densities as \emph{weighted} scattered particles. 

Different from Monte Carlo-type algorithms, the weight updates in our setting are done explicitly by solving a regularized dual ascent. This computation occurs within the dashed box highlighted in Fig. \ref{fig:scheme}. The particles' location updates are done via \emph{nonlocal} Euler-Maruyama. These two updates interact with each other (see Fig. \ref{fig:scheme}), and together set up a discrete time-stepping scheme. 

The discrete time-stepping procedure we propose, is a novel interacting particle system in the form of a meshless algorithm. Our contribution advances the state-of-the-art as it allows for evolving the neuronal population distribution in an online manner, as needed in mean field learning. This is in contrast to \emph{a posteriori} function approximation in existing Monte Carlo methods (cf. Sec. \ref{subsec:compchallenges}). Explicit proximal weight updates allows us to bypass offline high dimensional function approximation, thereby realizing mean field learning at an algorithm level.

With respect to the computational challenges mentioned in Sec. \ref{subsec:compchallenges}, it is perhaps surprising that we are able to design an algorithm for explicitly evolving the population densities without directly discretizing the spatial domain of the underlying PDE. Our main idea to circumvent the computational difficulty is to solve the gradient flow PDE \emph{not} as a PDE, but to instead direct the algorithmic development for a proximal recursion associated with the gradient flow PDE. This allows us to implement the associated proximal recursion over a suitably discrete time without directly discretizing the parameter space (the latter is what makes the computation otherwise problematic in the mean field regime).

For specificity, we illustrate the proposed framework on two numerical experiments involving binary and multi-class classification with quadratic risk. The proposed methodology should be of broad interest to other problems such as the policy optimization \citep{zhang2018policy,chu2019probability,zhang2020variational} and the adversarial learning \citep{domingo2020mean,mroueh2021convergence,lu2023two}. 

We emphasize that the perspective taken in this work is somewhat non-standard w.r.t. the existing literature in that our main intent is to explore the possibility of designing a new class of algorithms by leveraging the connection between the mean field PDE and the Wasserstein proximal operator. This is a new line of idea for learning algorithm design that we show is feasible. As such, we do not aim to immediately surpass the carefully engineered existing state-of-the-art in experiments. Instead, this study demonstrates a proof-of-concept which should inspire follow up works.

\subsection{Notations and preliminaries}

We use the standard convention where the boldfaced lowercase letters denote vectors, boldfaced uppercase letters denote matrices, and non-boldfaced letters denote scalars. We use the symbols $\nabla$ and $\Delta$ to denote the Euclidean gradient and Laplacian operators, respectively. In case of potential confusion, we attach subscripts to these symbols to clarify the differential operator is taken w.r.t. which variable. The symbols $\odot, \oslash$, $\exp$, and $\tanh$ denote \emph{elementwise} multiplication, division, exponential, and hyperbolic tangent, respectively. Furthermore, ${\rm{rand}}$ and ${\rm{randn}}$ denote draw of the uniform and standard normal distributed random vector of appropriate dimension. In addition, $\pmb{1}$ represents a vector of ones of appropriate dimension.

Let $\mathcal{Z}_{1},\mathcal{Z}_2\subseteq\mathbb{R}^{d}$. The squared 2-Wasserstein metric $W_2$ (with standard Euclidean ground cost) between two probability measures $\pi_1(\differential \pmb{z}_1)$ and $\pi_2(\differential \pmb{z}_2)$ (or between the corresponding densities when the measures are absolutely continuous), where $\pmb{z}_1 \in \mathcal{Z}_1$, $\pmb{z}_2 \in \mathcal{Z}_2$, is defined as
\begin{align}
&W_2^{2}\left(\pi_1,\pi_2\right) := \!\!\underset{\pi\in\Pi\left(\pi_1,\pi_2\right)}{\inf}\!\displaystyle\int_{\mathcal{Z}_1 \times \mathcal{Z}_2}\!\!\!\|\pmb{z}_1 - \pmb{z}_2\|_2^2
\:\differential\pi(\pmb{z}_1,\pmb{z}_2).
\label{NN_wasertien}    
\end{align}
In (\ref{NN_wasertien}), the symbol $\Pi\left(\pi_1,\pi_2\right)$ denotes the collection of joint measures (couplings) with finite second moments, whose first marginal is $\pi_1$, and the second marginal is $\pi_2$.

\subsection{Organization}
The remainder of this paper is organized as follows. In Sec. \ref{sec:ERM2MFL}, we provide the necessary background for the empirical risk minimization and for the corresponding mean field limit. The proposed proximal algorithm (including its derivation, convergence guarantee and implementation) is detailed in Sec. \ref{sec:Algorithm}. We then report numerical case studies in Sec. \ref{sec:NumericalExperimentsBinary} and Sec. \ref{sec:NumericalExperimentsMulti} for binary and multi-class classifications, respectively. Sec. \ref{sec:Conclusion} concludes the paper. Supporting proofs and derivations are provided in Appendices \ref{sec:Thm_Proof}, \ref{sec: vderivatives} and \ref{sec:uderivatives}. Numerical results for a synthetic one dimensional case study of learning a sinusoid using the proposed proximal algorithm is provided in Appendix \ref{sec:NumericalSine}.

\section{From empirical risk minimization to proximal mean field learning}\label{sec:ERM2MFL}
To motivate the mean field learning formulation, we start by discussing the more familiar empirical risk minimization set up. We then explain the infinite width limit for the same.

\subsection{Empirical risk minimization}
We consider a supervised learning problem where the dataset comprises of the features $\pmb{x}\in \mathcal{X}\subseteq \mathbb{R}^{n_x}$, and the labels $y\in\mathcal{Y}\subseteq\mathbb{R}$, i.e., the samples of the dataset are tuples of the form 
$$(\pmb{x},y) \in \mathcal{X} \times \mathcal{Y} \subseteq \mathbb{R}^{n_x}\times \mathbb{R}.$$
The objective of the supervised learning problem is to find the parameter vector $\pmb{\theta} \in \mathbb{R}^{p}$ such that $y\approx f(\pmb{x},\pmb{\theta})$ where $f$ is some function class parameterized by $\pmb{\theta}$. In other words, $f$ maps from the feature space $\mathcal{X}$ to the label space $\mathcal{Y}$.
To this end, we consider a shallow neural network with a single hidden layer having $n_{{\rm{H}}}$ neurons. Then, the parameterized function $f$ admits representation 
\begin{align}
f(\pmb{x}, \pmb{\theta}):=\frac{1}{n_{{\rm{H}}}} \sum_{i=1}^{n_{{\rm{H}}}} \Phi\left(\pmb{x}, \pmb{\theta}_{i}\right),
\label{FiniteRepresentation}
\end{align}
where $\Phi\left(\pmb{x}, \pmb{\theta}_{i}\right):=a_{i} \sigma(\langle \pmb{w}_i, \pmb{x}\rangle+ b_{i})$ for all $i\in[n_{{\rm{H}}}] := \{1,2,\hdots,n_{{\rm{H}}}\}$, and
$\sigma(\cdot)$ is a smooth activation function. The parameters $a_{i},\pmb{w}_{i}$ and $b_{i}$ are the scaling, weights, and bias of the $i^{\text{th}}$ hidden neuron, respectively, and together comprise the specific parameter realization $\pmb{\theta}_i\in\mathbb{R}^{p}$, $i\in[n_{{\rm{H}}}]$. 

We stack the parameter vectors of all hidden neurons as $$\overline{\pmb{\theta}} :=\begin{pmatrix} \pmb{\theta}_{1}, \pmb{\theta}_{2}, \ldots, \pmb{\theta}_{n_{{\rm{H}}}}\end{pmatrix}^{\top} \in \mathbb{R}^{p n_{{\rm{H}}}}$$
and consider minimizing the following quadratic loss:
\begin{align}
l(y, \pmb{x}, \overline{\pmb{\theta}}) \equiv  l(y, f(\pmb{x}, \overline{\pmb{\theta}})) := \underbrace{\left(y-f(\pmb{x}, \overline{\pmb{\theta}})\right)^{2}}_{\text{quadratic loss}}.
\end{align}
We suppose that the training data follows the joint probability distribution $\gamma$, i.e., $(\pmb{x}, y) \sim \gamma$. Define the \emph{population risk} $R$ as the expected loss given by
\begin{equation}
R(f):=\mathbb{E}_{(\pmb{x}, y)\sim \gamma} [l(y, \pmb{x}, \overline{\pmb{\theta}})].
\end{equation}
In practice, $\gamma$ is unknown, so we approximate the population risk with the \emph{empirical risk}
\begin{equation}
R(f) \approx \frac{1}{n_{\text{data}}} \sum_{j=1}^{n_{\text{data}}} l\left(y_{j}, \pmb{x}_{j}, \overline{\pmb{\theta}}\right)
\end{equation}
where $n_{\text{data}}$ is the number of data samples.
Then, the supervised learning problem reduces to the empirical risk minimization problem
\begin{equation}
    \begin{aligned}
    \min_{\overline{\pmb{\theta}} \in \mathbb{R}^{p 
 n_{\rm{H}}}} R(f).
    \end{aligned}
\label{ERM}    
\end{equation}
Problem (\ref{ERM}) is a large but finite dimensional optimization problem that is nonconvex in the decision variable $\overline{\pmb{\theta}}$. The standard approach is to employ first or second order search algorithms such as the variants of SGD or ADAM \citep{kingma2014adam}. 

\subsection{Mean field limit} The mean field limit concerns with a continuum of hidden layer neuronal population by letting $n_{{\rm{H}}}\rightarrow\infty$. Then, we view (\ref{FiniteRepresentation}) as the empirical average associated with the ensemble average
\begin{align}
&f_{\rm{MeanField}} := \!\!\int_{\mathbb{R}^{p}} \!\!\!\!\Phi(\pmb{x}, \pmb{\theta})\!\!\!\!\!\!\!\!\!\!\underbrace{\differential \mu(\pmb{\theta})}_{\text {hidden neuronal population mass}} \!\!\!\!\!\!= \mathbb{E}_{\pmb{\theta}}[\Phi(\pmb{x}, \pmb{\theta})],
\label{parametrized_function}
\end{align}
where $\mu$ denotes the joint population measure supported on the hidden neuronal parameter space in $\mathbb{R}^{p}$. Assuming the absolute continuity of $\mu$ for all times, we write $\differential \mu(\pmb{\theta})=\rho(\pmb{\theta}) \differential \pmb{\theta}$ where $\rho$ denotes the joint population density function (PDF). 

Thus, the risk functional $R$, now viewed as a function of the joint PDF $\rho$, takes the form
\begin{align}
F(\rho):=R(f_{\rm{MeanField}}(\pmb{x}, \rho))&=\mathbb{E}_{(\pmb{x}, y)}\left(y-\int_{\mathbb{R}^{p}} \Phi(\pmb{x}, \pmb{\theta}) \rho(\pmb{\theta}) \differential\pmb{\theta}\right)^{2}\nonumber\\
&= F_{0}+\!\int_{\mathbb{R}^{p}}\!\!V(\pmb{\theta}) \rho(\pmb{\theta}) \differential \pmb{\theta} +\!\int_{\mathbb{R}^{2 p}}\!\!\!U(\pmb{\theta},\Tilde{\pmb{\theta}}) \rho(\pmb{\theta}) \rho(\Tilde{\pmb{\theta}}) \differential \pmb{\theta} \differential \tilde{\pmb{\theta}},
\label{f(rho)}
\end{align}
where 
\begin{equation}
\begin{aligned}
F_{0} := \mathbb{E}_{(\pmb{x}, y)}\left[y^{2}\right], \quad
V(\pmb{\theta}) := \mathbb{E}_{(\pmb{x}, y)}[-2 y\Phi(\pmb{x}, \pmb{\theta})], \quad
U(\pmb{\theta}, \tilde{\pmb{\theta}}) := \mathbb{E}_{(\pmb{x}, y)}[\Phi(\pmb{x}, \pmb{\theta}) \Phi(\pmb{x},\tilde{\pmb{\theta}})].
\end{aligned}
\label{VandU}
\end{equation}
Therefore, the supervised learning problem, in this mean field limit, becomes an infinite dimensional variational problem: 
\begin{align}
\underset{\rho}{\min}\; F(\rho)
\label{minF}
\end{align}
where $F$ is a sum of three functionals. The first summand $F_{0}$ is independent of $\rho$. The second summand is a potential energy given by expected value of ``drift potential'' $V$ and is linear in $\rho$. The last summand is a bilinear interaction energy involving an ``interaction potential'' $U$ and is nonlinear in $\rho$. 

The main result in \citet{mei2018mean} was that using first order SGD learning dynamics induces a gradient flow of the functional $F$ w.r.t. the 2-Wasserstein metric $W_{2}$, i.e., the mean field learning dynamics results in a joint PDF trajectory $\rho(t, \pmb{\theta})$. Then, the minimizer in (\ref{minF}) can be obtained from the large $t$ limit of the following nonlinear PDE:
\begin{equation}
\frac{\partial \rho}{\partial t}=-\nabla^{W_{2}} F(\rho),
\label{NN_PDE}
\end{equation}
where the 2-Wasserstein gradient \citep[Ch. 9.1]{villani2021topics} \citep[Ch. 8]{ambrosio2005gradient} of $F$ is
$$\nabla^{W_{2}} F(\rho) := -\nabla \cdot\left(\rho \nabla \frac{\delta F}{\partial \rho}\right),$$
and $\frac{\delta}{\delta\rho}$ denotes the functional derivative w.r.t. $\rho$. 

In particular, \citet{mei2018mean} considered the regularized risk functional\footnote{The parameter $\beta>0$ is referred to as the inverse temperature.} 
\begin{align}
F_{\beta}(\rho) := F(\rho) + \beta^{-1}\int_{\mathbb{R}^{p}}\rho\log\rho\:\differential\pmb{\theta}, \quad \beta>0,    
\label{risk_functional_reg}   
\end{align}
by adding a strictly convex regularizer (scaled negative entropy) to the unregularized risk $F$. In that case, the sample path dynamics corresponding to the macroscopic dynamics (\ref{NN_PDE}) precisely becomes the noisy SGD:
\begin{align}
\differential\pmb{\theta} =& -\nabla_{\pmb{\theta}}\left(V(\pmb{\theta}) + \int_{\mathbb{R}^{p}} U(\pmb{\theta},\tilde{\pmb{\theta}})\rho(\tilde{\pmb{\theta}})\differential\tilde{\pmb{\theta}}\right)\:\differential t + \sqrt{2\beta^{-1}}\:\differential\pmb{\eta}, \quad \pmb{\theta}(t=0)\sim\rho_0,
\label{NoisySGD}    
\end{align}
where $\pmb{\eta}$ is the standard Wiener process in $\mathbb{R}^{p}$, and the random initial condition $\pmb{\theta}(t=0)$ follows the law of a suitable PDF $\rho_0$ supported over $\mathbb{R}^{p}$. 

In this regularized case, (\ref{NN_PDE}) results in the following nonlinear PDE initial value problem (IVP): 
\begin{align}
\frac{\partial \rho}{\partial t}=& \nabla_{\pmb{\theta}}\cdot\left(\rho\left(V(\pmb{\theta}) + \int_{\mathbb{R}^{p}} U(\pmb{\theta},\tilde{\pmb{\theta}})\rho(\tilde{\pmb{\theta}})\differential\tilde{\pmb{\theta}}\right)\right) + \beta^{-1}\Delta_{\pmb{\theta}}\rho, \quad \rho(t=0,\pmb{\theta}) = \rho_{0}.
\label{NN_PDE_reg} 
\end{align}
In other words, the noisy SGD induces evolution of a PDF-valued trajectory governed by the advection, nonlocal interaction, and diffusion--the latter originating from regularization. Notice that a large value of $\beta > 0$ implies a small entropic regularization in (\ref{risk_functional_reg}), hence a small additive process noise in (\ref{NoisySGD}), and consequently, a small diffusion term in the PDE (\ref{NN_PDE_reg}).

The regularized risk functional $F_{\beta}$ in (\ref{risk_functional_reg}) can be interpreted as a free energy wherein $F$ contributes a sum of the advection potential energy and interaction energy. The term $\beta^{-1}\int_{\mathbb{R}^{p}}\rho\log\rho\:\differential\pmb{
\theta}$ contributes an internal energy due to the noisy fluctuations induced by the additive Wiener process noise $\sqrt{2\beta^{-1}}\differential\pmb{\eta}$ in (\ref{NoisySGD}).

In \citet{mei2018mean}, asymptotic guarantees were obtained for the solution of (\ref{NN_PDE_reg}) to converge to the minimizer of $F_{\beta}$. Our idea, outlined next, is to solve the minimization of $F_{\beta}$ using measure-valued proximal recursions.

\subsection{Proximal mean field learning} For numerically computing the solution of the PDE IVP (\ref{NN_PDE_reg}), we propose proximal recursions over $\mathcal{P}_{2}\left(\mathbb{R}^{p}\right)$, defined as the manifold of joint PDFs supported over $\mathbb{R}^{p}$ having finite second moments. Symbolically,
\begin{align*}
\mathcal{P}_{2}\left(\mathbb{R}^{p}\right) := \bigg\{\text{Lebesgue integrable}\,\rho\,\text{over $\mathbb{R}^{p}$} \mid \rho\geq 0,
\int_{\mathbb{R}^{p}} \rho\:\differential\pmb{\theta} = 1, \int_{\mathbb{R}^{p}} \pmb{\theta}^{\top}\pmb{\theta}\rho\:\differential\pmb{\theta}<\infty\bigg\}.
\end{align*}

Proximal updates generalize the concept of gradient steps, and are of significant interest in both finite and infinite dimensional optimization \citep{rockafellar1976augmented,rockafellar1976monotone,bauschke2011convex,teboulle1992entropic,bertsekas2011incremental,parikh2014proximal}. For a given input, these updates take the form of a structured optimization problem:
\begin{align}
&\text{proximal update}\nonumber\\
&= \!\!\!\underset{\text{decision variable}}{\arg\inf}\bigg\{\frac12{\rm{dist}}^{2}\left(\text{decision variable}, \text{input}\right) +\text{time step}\times\text{functional}\left(\text{decision variable}\right)\!\!\bigg\},
\label{TemplateProxUpdate}
\end{align}
for some suitable notion of distance $\rm{dist}\left(\cdot,\cdot\right)$ on the space of decision variables, and some associated functional. It is usual to view (\ref{TemplateProxUpdate}) as an \emph{operator} mapping input $\mapsto$ proximal update, thus motivating the term \emph{proximal operator}. 

The connection between (\ref{TemplateProxUpdate}) and the gradient flow comes from recursively evaluating (\ref{TemplateProxUpdate}) with some initial choice for the input. For suitably designed pair $\left(\rm{dist}\left(\cdot,\cdot\right),\:\text{functional}\right)$, in the small time step limit, the sequence of proximal updates generated by (\ref{TemplateProxUpdate}) converge to the infimum of the functional. In other words, the gradient descent of the functional w.r.t. ${\rm{dist}}$ may be computed as the fixed point of the proximal operator (\ref{TemplateProxUpdate}). For a parallel between gradient descent and proximal recursions in finite and infinite dimensional gradient descent, see e.g., \citep[Sec. I]{halder2017gradient}. Infinite dimensional proximal recursions over the manifold of PDFs have recently appeared in uncertainty propagation \citep{caluya2019proximal,halder2022stochastic}, stochastic filtering \citep{halder2017gradient,halder2018gradient,halder2019proximal}, and stochastic optimal control \citep{caluya2021wasserstein,caluya2021reflected}.

In our context, the decision variable $\rho\in\mathcal{P}_{2}\left(\mathbb{R}^{p}\right)$ and the distance metric $\rm{dist}\equiv W_2$. Specifically, we propose recursions over discrete time $t_{k-1}:=(k-1)h$ where the index $k\in\mathbb{N}$, and $h>0$ is a constant time step-size. Leveraging that (\ref{NN_PDE_reg}) describes gradient flow of the functional $F_{\beta}$ w.r.t. the $W_2$ distance metric, the associated proximal recursion is of the form
\begin{align}
\varrho_{k} = \prox_{hF_{\beta}}^{W_2}\left(\varrho_{k-1}\right) := \arginf_{\varrho\in \mathcal{P}_{2}\left(\mathbb{R}^{p}\right)} \bigg\{\frac{1}{2}\left(W_2\left(\varrho,\varrho_{k-1}\right)\right)^{2} + h\:F_{\beta}\left(\varrho\right)\bigg\}
\label{proxrho}    
\end{align}
where $\varrho_{k-1}(\cdot):=\varrho\left(\cdot,t_{k-1}\right)$, and $\varrho_{0}\equiv\rho_0$. The notation $\prox_{hF_{\beta}}^{W_2}\left(\varrho_{k-1}\right)$ can be parsed as ``the proximal operator of the scaled functional $h F_{\beta}$ w.r.t. the distance $W_2$, acting on the input $\varrho_{k-1}\in\mathcal{P}_{2}\left(\mathbb{R}^{p}\right)$''. Our idea is to evaluate the recursion in the small $h$ limit, i.e., for $h\downarrow 0$.

To account for the nonconvex bilinear term appearing in (\ref{risk_functional_reg}), following \citep[Sec. 4]{benamou2016augmented}, we employ the approximation: 
\begin{align*}
\int_{\mathbb{R}^{2 p}}U(\pmb{\theta},\Tilde{\pmb{\theta}}) \varrho(\pmb{\theta}) \varrho(\Tilde{\pmb{\theta}}) \differential \pmb{\theta} \differential \tilde{\pmb{\theta}}
\approx \int_{\mathbb{R}^{2 p}}\!\!U(\pmb{\theta},\Tilde{\pmb{\theta}}) \varrho(\pmb{\theta}) \varrho_{k-1}(\Tilde{\pmb{\theta}}) \differential \pmb{\theta} \differential \tilde{\pmb{\theta}} \quad \forall k\in\mathbb{N}.
\end{align*}
We refer to the resulting approximation of $F_{\beta}$ as $\hat{F}_{\beta}$, i.e.,
\begin{align*}
\hat{F}_{\beta}\left(\varrho,\varrho_{k-1}\right) := \int_{\mathbb{R}^{p}}\left(F_{0} + V(\pmb{\theta}) + \left(\int_{\mathbb{R}^{p}}U(\pmb{\theta},\tilde{\pmb{\theta}})\varrho_{k-1}(\tilde{\pmb{\theta}})\differential\tilde{\pmb{\theta}}\right) + \beta^{-1}\log\varrho(\pmb{\theta})\right)\varrho(\pmb{\theta})\differential\pmb{\theta}.
\end{align*}
Notice in particular that $\hat{F}_{\beta}$ depends on both $\varrho,\varrho_{k-1}$, $k\in\mathbb{N}$.
Consequently, this approximation results in a \emph{semi-implicit} variant of (\ref{proxrho}), given by
\begin{align}
\!\!\varrho_{k} := \arginf_{\varrho\in \mathcal{P}_{2}\left(\mathbb{R}^{p}\right)} \bigg\{\frac{1}{2}\left(W_2\left(\varrho,\varrho_{k-1}\right)\right)^{2} \!+\! h \hat{F}_{\beta}\left(\varrho,\varrho_{k-1}\right)\bigg\}.
\label{semiimplicit}    
\end{align}
We have the following consistency guarantee, stated informally, among the solution of the PDE IVP (\ref{NN_PDE_reg}) and that of the variational recursions (\ref{semiimplicit}). The rigorous statement and proof are provided in Appendix \ref{sec:Thm_Proof}.

\begin{theorem}\label{thm:consistency} (\textbf{Informal}) Consider the regularized risk functional (\ref{risk_functional_reg}) wherein $F$ is given by (\ref{f(rho)})-(\ref{VandU}). In the small time step ($h\downarrow 0$) limit, the proximal updates (\ref{semiimplicit}) with $\varrho_0\equiv\rho_0$ converge to the solution for the PDE IVP (\ref{NN_PDE_reg}).
\end{theorem}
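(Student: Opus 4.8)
The plan is to follow the now-standard JKO-type consistency argument (Jordan--Kinderlehrer--Otto, and its adaptations in e.g.\ the references cited around (\ref{semiimplicit})), adapted to the semi-implicit interaction term. First I would set up the discrete-in-time interpolation: given the step size $h>0$, let $\varrho_k$ solve the proximal problem (\ref{semiimplicit}) and define the piecewise-constant interpolant $\varrho^{h}(t,\cdot):=\varrho_k$ for $t\in[t_{k-1},t_k)$, together with a companion "de Giorgi" interpolant that tracks the optimal transport geodesic between consecutive steps. The first substantive step is to establish the basic a priori estimates: using $\varrho_{k-1}$ itself as a competitor in (\ref{semiimplicit}) one gets the one-step inequality $\tfrac12 W_2^2(\varrho_k,\varrho_{k-1}) + h\hat F_\beta(\varrho_k,\varrho_{k-1}) \le h\hat F_\beta(\varrho_{k-1},\varrho_{k-1})$; summing over $k$ yields (i) a uniform bound on $\sum_k W_2^2(\varrho_k,\varrho_{k-1})/h$ (the discrete metric-velocity / total-square-displacement bound), and (ii) uniform bounds on the free energy $F_\beta(\varrho_k)$, hence on the entropy $\int\varrho_k\log\varrho_k$ and, via the coercivity of $V$ and boundedness of $U$ coming from (\ref{VandU}) under smooth bounded activation $\sigma$, on the second moment $\int\|\pmb\theta\|^2\varrho_k$. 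These give tightness in $\mathcal P_2(\mathbb R^p)$ and, through the total-square-displacement bound, a uniform-in-$h$ $C^{1/2}$-type modulus of continuity in $t$ for $\varrho^h$ in $W_2$.

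Next I would extract, by a refined Arzel\`a--Ascoli argument on $(\mathcal P_2(\mathbb R^p),W_2)$ (using that the second-moment bound makes sublevel sets of $F_\beta$ narrowly compact, plus the equi-H\"older continuity in time), a subsequence along which $\varrho^h(t,\cdot)\to\rho(t,\cdot)$ narrowly for every $t$, with $\rho\in C^{1/2}([0,\infty);(\mathcal P_2,W_2))$ and $\rho(0,\cdot)=\rho_0$. The key step is then to identify the limit as the weak solution of (\ref{NN_PDE_reg}). For this I would write the Euler--Lagrange / first-order optimality condition for the one-step problem (\ref{semiimplicit}): for any smooth compactly supported test vector field $\pmb\xi$, perturbing $\varrho_k$ by the flow of $\pmb\xi$ gives
\begin{align*}
\int_{\mathbb R^p}\langle \nabla\varphi_k,\pmb\xi\rangle\,\varrho_k\,\differential\pmb\theta
= -h\int_{\mathbb R^p}\Big\langle \nabla_{\pmb\theta}\Big(V+\!\int U(\pmb\theta,\tilde{\pmb\theta})\varrho_{k-1}(\tilde{\pmb\theta})\differential\tilde{\pmb\theta}\Big)+\beta^{-1}\nabla\log\varrho_k,\ \pmb\xi\Big\rangle\varrho_k\,\differential\pmb\theta,
\end{align*}
where $\nabla\varphi_k$ is the optimal Kantorovich potential realizing $W_2(\varrho_k,\varrho_{k-1})$, so that $(\mathrm{id}-\nabla\varphi_k)_\#\varrho_k=\varrho_{k-1}$. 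Dividing by $h$, testing against $\partial_t$ of a smooth test function $\phi(t,\pmb\theta)$, summing over $k$, and performing the discrete summation-by-parts in time produces a discrete weak formulation; the displacement term $\frac1h\int\langle\nabla\varphi_k,\nabla\phi\rangle\varrho_k$ is controlled using the elementary estimate $|\int(\phi(\pmb\theta)-\phi(T_k\pmb\theta))\differential\varrho_k - \int\langle\nabla\phi,\nabla\varphi_k\rangle\differential\varrho_k|\le \tfrac12\|D^2\phi\|_\infty W_2^2(\varrho_k,\varrho_{k-1})$, which is summably $o(1)$ by estimate (i). Passing $h\downarrow0$ along the subsequence, using narrow convergence for the $V$ and $U$ terms (here the semi-implicit replacement $\varrho(\tilde{\pmb\theta})\mapsto\varrho_{k-1}(\tilde{\pmb\theta})$ is harmless precisely because $W_2(\varrho_k,\varrho_{k-1})\to0$ and $\nabla_{\pmb\theta}U$ is bounded and continuous) and lower-semicontinuity of the entropy / Fisher-information-type term for the Laplacian contribution, yields exactly the weak form of (\ref{NN_PDE_reg}) with the correct initial datum. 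Uniqueness of the weak solution of the IVP (\ref{NN_PDE_reg}), which holds in this McKean--Vlasov setting by the displacement-convexity of the entropy plus the Lipschitz/bounded structure of $V,U$, then upgrades subsequential convergence to full convergence, completing the proof.

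The main obstacle I anticipate is the rigorous handling of the diffusion (entropy) term in the limit passage: unlike the advection and interaction terms, which are linear, respectively bilinear and pass to the limit by narrow convergence alone, the term $\beta^{-1}\int\nabla\log\varrho_k\cdot\nabla\phi\,\varrho_k = \beta^{-1}\int\nabla\varrho_k\cdot\nabla\phi$ requires an $L^1_{\mathrm{loc}}$ (or weak) convergence of $\nabla\varrho_k$, which one obtains only by extracting a uniform bound on $\int|\nabla\sqrt{\varrho_k}|^2$ (relative Fisher information) from the entropy-dissipation structure of the scheme — this is the technically delicate "flow-interchange" / entropy-dissipation estimate in the style of Matthes--McCann--Savar\'e. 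A secondary subtlety is justifying the first-order optimality condition itself, i.e.\ that the minimizer $\varrho_k$ of (\ref{semiimplicit}) is absolutely continuous with $\log\varrho_k\in W^{1,1}_{\mathrm{loc}}$ so that the internal-energy variation is legitimate; this follows from the entropy term forcing nondegeneracy but needs the standard regularity argument for JKO minimizers. Everything else — the a priori estimates, tightness, the discrete-to-continuous weak formulation — is routine given the smooth bounded activation assumption that makes $V$, $U$, and their $\pmb\theta$-gradients bounded with at most the required growth.
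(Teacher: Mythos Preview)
Your proposal is correct and follows essentially the same JKO-type consistency argument the paper uses; the paper's version is in fact terser than yours, reducing to a verification that the effective (semi-implicitly frozen) advection potential $\mathcal{V}_{\text{advec}}$ satisfies the uniform Lipschitz hypotheses of Laborde \S12.2 and then citing Laborde \S12.3--12.5 for existence/uniqueness of the discrete minimizers, the discrete weak formulation, and the limit passage --- the same three stages you spell out in detail. One minor correction: $V$ in (\ref{VandU}) is \emph{not} coercive (with bounded activation $\sigma$ it grows at most linearly in the scale parameter $a$ and not at all in $(\pmb w,b)$), so the second-moment control you need should come from the total-square-displacement bound on finite time horizons, which you already derive, rather than from coercivity of $V$.
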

We next detail the proposed algorithmic approach to numerically solve (\ref{semiimplicit}).

\section{\textproc{ProxLearn}: proposed proximal algorithm}\label{sec:Algorithm}
The overall workflow of our proposed proximal mean field learning framework is shown in Fig. \ref{fig:scheme}. We generate $N$ samples from the known initial joint PDF $\varrho_{0}$ and store them as a weighted point cloud $\left\{\pmb{\theta}_{0}^{i}, \varrho_{0}^{i}\right\}_{i=1}^{N}$. Here, $\varrho_{0}^{i} := \varrho_{0}\left(\pmb{\theta}_{0}^{i}\right)$ for all $i\in[N]$. In other words, the weights of the samples are the joint PDF values evaluated at those samples. 

For each $k \in \mathbb{N}$, the weighted point clouds $\left\{\pmb{\theta}_{k}^{i}, \varrho_{k}^{i}\right\}_{i=1}^{N}$ are updated through the two-step process outlined in our proposed Algorithm \ref{alg:proximal}, referred to as \textproc{ProxLearn}. At a high level, lines 9--18 in Algorithm 1 perform nonlinear block co-ordinate recursion on internally defined vectors $\pmb{z},\pmb{q}$ whose converged values yield the proximal update (line 19). We next explain where these recursions come from detailing both the derivation of \textproc{ProxLearn} and its convergence guarantee.

\subsection{Derivation of \textproc{ProxLearn}}\label{subsec:derivationproxlearn}

The main idea behind our derivation that follows, is to regularize and dualize the discrete version of the optimization problem in (\ref{semiimplicit}). This allows us to leverage certain structure of the optimal solution that emerges from the first order conditions for optimality, which in turn helps design a custom numerical recursion.

Specifically, to derive the recursion given in \textproc{ProxLearn}, we first write the discrete version of (\ref{semiimplicit}) as
\begin{align}   
\pmb{\varrho}_k= \underset{\pmb{\varrho}}{\arg \min }\bigg\{\min _{\pmb{M} \in \Pi\left(\pmb{\varrho}_{k-1}, \pmb{\varrho}\right)} \frac{1}{2}\left\langle\pmb{C}_{k}, \pmb{M}\right\rangle +h\left\langle\pmb{v}_{k-1}+\pmb{U}_{k-1} \pmb{\varrho}_{k-1} +\beta^{-1} \log \pmb{\varrho}, \pmb{\varrho}\right\rangle\bigg\}, \quad k\in\mathbb{N},
\label{discreteProx}
\end{align}
where 
\begin{align}
\Pi\left(\pmb{\varrho}_{k-1}, \pmb{\varrho}\right) &:= \{\pmb{M}\in\mathbb{R}^{N\times N} \mid \pmb{M} \geq \pmb{0}\;\text{(elementwise)}, \pmb{M}\pmb{1} = \pmb{\varrho}_{k-1},\; \pmb{M}^{\top}\pmb{1} = \pmb{\varrho}\}, \label{phi}\\
\pmb{v}_{k-1}&\equiv V\left(\pmb{\theta}_{k-1}\right),\\
    \pmb{U}_{k-1}&\equiv U\left(\pmb{\theta}_{k-1}, \tilde{\pmb{\theta}}_{k-1}\right),
\end{align}
  and $\pmb{C}_{k}\in\mathbb{R}^{N\times N}$ denotes the squared Euclidean  distance matrix, i.e., 
  $$\pmb{C}_{k}(i,j) := \|\pmb{\theta}^{i}_{k}-\pmb{\theta}^{j}_{k-1}\|_{2}^{2}\quad \forall (i,j)\in[N]\times[N].$$

We next follow a ``regularize-then-dualize'' approach. In particular, we regularize (\ref{discreteProx}) by adding the entropic regularization $H(\pmb{M}):=\langle\pmb{M}, \log \pmb{M}\rangle$, and write
\begin{align}    \pmb{\varrho}_k=\underset{\pmb{\varrho}}{\arg \min }\left\{\min _{\pmb{M} \in \Pi\left(\pmb{\varrho}_{k-1}, \pmb{\varrho}\right)} \frac{1}{2}\left\langle\pmb{C}_{k}, \pmb{M}\right\rangle+\epsilon H(\pmb{M})+h\left\langle\pmb{v}_{k-1}+\pmb{U}_{k-1} \pmb{\varrho}_{k-1} +\beta^{-1} \log \pmb{\varrho}, \pmb{\varrho}\right\rangle\right\}, \quad k \in \mathbb{N}
\label{regularizedPDE}    
\end{align}
where $\epsilon>0$ is a regularization parameter.

Following \citet[Lemma 3.5]{karlsson2017generalized}, \citet[Sec. III]{caluya2019gradient}, the Lagrange dual problem associated with (\ref{regularizedPDE}) is
\begin{align}
 \left(\pmb{\lambda}_0^{\text {opt }}, \pmb{\lambda}_1^{\text {opt }}\right)=\underset{\pmb{\lambda}_0, \pmb{\lambda}_1 \in \mathbb{R}^N}{\arg \max }\Biggl\{\left\langle\pmb{\lambda}_0, \pmb{\varrho}_{k-1}\right\rangle-\hat{F}_{\beta}^{\star}\left(-\pmb{\lambda}_1\right)-\frac{\epsilon}{h}\left(\exp \left(\pmb{\lambda}_0^{\top} h / \epsilon\right) \exp \left(-\pmb{C}_{k} / 2 \epsilon\right) \exp \left(\pmb{\lambda}_1 h / \epsilon\right)\right)\Biggr\}
    \label{dualProblem}
\end{align}

where
\begin{align}
    \hat{F}_{\beta}^{\star}(\cdot):=\sup _{\vartheta}\{\langle\cdot, \vartheta\rangle-\hat{F}_{\beta}(\vartheta)\}
    \label{conjugate}
\end{align}
is the Legendre-Fenchel conjugate of the free energy $\hat{F}_{\beta}$ in (\ref{semiimplicit}), and the optimal coupling matrix $\pmb{M}^{\text {opt }}:=\left[m^{\mathrm{opt}}(i, j)\right]_{i, j=1}^N$ in (\ref{regularizedPDE}) has the Sinkhorn form
\begin{align}
    m^{\mathrm{opt}}(i, j)
    =\exp \left(\pmb{\lambda}_0(i) h / \epsilon\right) \exp \left(-\pmb{C}_k(i, j) /(2 \epsilon)\right)\exp \left(\pmb{\lambda}_1(j) h / \epsilon\right).
    \label{coupling_matrix}
\end{align}

To solve (\ref{dualProblem}), considering (\ref{risk_functional_reg}), we write the  “discrete free energy” as 
\begin{align}
    \hat{F}_{\beta}(\pmb{\varrho})=\left\langle \pmb{v}_{k-1}+\pmb{U}_{k-1}\pmb{\varrho}_{k-1}+\beta^{-1}\log \pmb{\varrho},\pmb{\varrho}\right \rangle.
\end{align}
Its Legendre-Fenchel conjugate, by (\ref{conjugate}), is
\begin{align}
    \hat{F}_{\beta}^{\star}(\lambda):=\sup _{\vartheta}\{\pmb{\lambda}^{\top}\pmb{\varrho}-\pmb{v}_{k-1}^{\top}\pmb{\varrho}-\pmb{\varrho}^{\top}\pmb{U}_{k-1}\pmb{\varrho}_{k-1}-\beta^{-1}\pmb{\varrho}^{\top}\log \pmb{\varrho}\}.
    \label{conjugate_equation}
\end{align}
Setting the gradient of the objective in (\ref{conjugate_equation}) w.r.t. $\pmb{\varrho}$ to zero,
and solving for $\pmb{\varrho}$ gives the maximizer
\begin{align}
\pmb{\varrho}_{\text{max}}=\exp \left( \beta\left( \pmb{\lambda}-\pmb{v}_{k-1}-\beta^{-1}\pmb{1}-\pmb{U}_{k-1}\pmb{\varrho}_{k-1} \right) \right).
    \label{rho_max}
\end{align}
Substituting (\ref{rho_max}) back into (\ref{conjugate_equation}), we obtain
\begin{align}
     \hat{F}_{\beta}^{\star}(\lambda)=\beta^{-1}\pmb{1} \exp \left(\beta\left( \pmb{\lambda}-\pmb{v}_{k-1}-\pmb{U}_{k-1}\pmb{\varrho}_{k-1} \right)-\pmb{1} \right).
\end{align}

Fixing $\pmb{\lambda}_0$, and taking the gradient of the objective in (\ref{dualProblem}) w.r.t. $\pmb{\lambda}_1$ gives
\begin{multline}
\exp \left(\pmb{\lambda}_1 h / \epsilon\right) \odot\left(\exp \left(-\pmb{C}_{k} / 2 \epsilon\right)^{\top} \exp \left(\pmb{\lambda}_0 h / \epsilon\right)\right)=\exp(- \beta \pmb{v}_{k-1} - \beta \pmb{U}_{k-1} \pmb{\varrho}_{k-1} - \pmb{1}) \odot \left(\exp \left(\pmb{\lambda}_1 h / \epsilon\right)\right)^{-\frac{\beta \epsilon}{h}}.
    \label{recursion2}
\end{multline}

Likewise, fixing $\pmb{\lambda}_1$, and taking the gradient of the objective in (\ref{dualProblem}) w.r.t. $\pmb{\lambda}_0$, gives 
\begin{align}
    \exp \left(\pmb{\lambda}_0 h / \epsilon\right) \odot\left(\exp \left(-\pmb{C}_{k} / 2 \epsilon\right) \exp \left(\pmb{\lambda}_1 h / \epsilon\right)\right)=&\pmb{\varrho}_{k-1}.
    \label{recursion1}
\end{align}

Next, letting $\pmb{\Gamma}_k:=\exp \left(-\pmb{C}_{k} / 2 \epsilon\right)$, $\pmb{q}:=\exp \left(\pmb{\lambda}_0 h / \epsilon\right)$, $\pmb{z}:=\exp \left(\pmb{\lambda}_1 h / \epsilon\right)$, and $\pmb{\xi}_{k-1} := \exp(- \beta \pmb{v}_{k-1} - \beta \pmb{U}_{k-1} \pmb{\varrho}_{k-1} - \pmb{1})$, we express (\ref{recursion2}) as \begin{align}
        \pmb{z} \odot\left(\pmb{\Gamma}_k^{\top} \pmb{q}\right)=&\pmb{\xi}_{k-1} \odot \pmb{z}^{-\frac{\beta \epsilon}{h}},
        \label{zupdate}
\end{align}
and (\ref{recursion1}) as 
\begin{align}
    \pmb{q} \odot\left(\pmb{\Gamma}_k \pmb{z}\right)=&\pmb{\varrho}_{k-1}.
    \label{yupdate}
\end{align}
Finally using (\ref{phi}), we obtain 
\begin{align}    \pmb{\varrho}_k=\left(\pmb{M}^{\mathrm{opt}}\right)^{\top} \mathbf{1}=\sum_{j=1}^N m^{\mathrm{opt}}(j, i) =\pmb{z}(i) \sum_{j=1}^N \pmb{\Gamma}_k(j, i) \pmb{q}(j)= \pmb{z} \odot \pmb{\Gamma}_k^{\top} \pmb{q}.
    \label{rhoUpdate}
\end{align}
In summary, (\ref{rhoUpdate}) allows us to numerically perform the proximal update.
\begin{remark}
Note that in Algorithm \ref{alg:proximal} (i.e., \textproc{ProxLearn}) presented in Sec. \ref{sec:Algorithm}, the lines 11, 12, and 19 correspond to (\ref{zupdate}), (\ref{yupdate}) and (\ref{rhoUpdate}), respectively.
\end{remark}

\subsection{Convergence of \textproc{ProxLearn}}\label{subsec:convergenceproxlearn}
Our next result provides the convergence guarantee for our proposed \textproc{ProxLearn} algorithm derived in Sec. \ref{subsec:derivationproxlearn}.
\begin{proposition}
The recursions given in lines 7--18 in Algorithm \ref{alg:proximal} \textproc{ProxLearn}, converge to a unique fixed point $\left(\pmb{q}^{\mathrm{opt}}, \pmb{z}^{\mathrm{opt}}\right) \in \mathbb{R}_{>0}^N \times \mathbb{R}_{>0}^N$. Consequently, the proximal update (\ref{rhoUpdate}) (i.e., the evaluation at line 19 in Algorithm \ref{alg:proximal}) is unique.
\end{proposition}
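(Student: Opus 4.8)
The plan is to recognize the coupled recursions \eqref{zupdate}--\eqref{yupdate} as a generalized (nonlinear) Sinkhorn iteration and establish convergence via a contraction argument in the Hilbert projective metric, following the strategy of \citet[Sec. III]{caluya2019gradient}. First I would observe that the two fixed-point equations can be decoupled: from \eqref{yupdate} we solve $\pmb{q} = \pmb{\varrho}_{k-1} \oslash (\pmb{\Gamma}_k \pmb{z})$, and substituting into \eqref{zupdate} yields a single self-map $\pmb{z} \mapsto \mathcal{T}(\pmb{z})$ on the positive orthant $\mathbb{R}_{>0}^N$, where $\mathcal{T}$ is built from the entries of $\pmb{\Gamma}_k$ (all strictly positive since $\pmb{C}_k$ has finite entries), componentwise division, and the power map $\pmb{z} \mapsto \pmb{z}^{-\beta\epsilon/h}$. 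The key structural fact is that each elementary operation appearing in $\mathcal{T}$ is either an order-reversing or order-preserving map that is nonexpansive, or strictly contractive, in the Hilbert metric $d_H(\pmb{x},\pmb{y}) := \log\big(\max_i (x_i/y_i)\big) - \log\big(\min_i(x_i/y_i)\big)$: linear maps with positive kernel are contractive with factor governed by Birkhoff's theorem, componentwise division by a fixed positive vector is an isometry, and the power map $\pmb{u}\mapsto\pmb{u}^{\alpha}$ scales $d_H$ by exactly $|\alpha|$.

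The main steps, in order, are: (i) write $\mathcal{T}$ explicitly as a composition and track how $d_H$ transforms under each factor, picking up the Birkhoff contraction coefficient $\kappa(\pmb{\Gamma}_k)<1$ from the linear steps and the factor $\tfrac{\beta\epsilon}{h}$ from the power step (the free exponent $1$ on the left side of \eqref{zupdate} contributes a unit factor); (ii) conclude that $\mathcal{T}$ is a $d_H$-contraction provided the product of these factors is $<1$ — and since $\pmb{\Gamma}_k$ has strictly positive entries, $\kappa(\pmb{\Gamma}_k)<1$ strictly, so for the regularized problem the composite factor is strictly less than one (one may also note the exponent combination works out favorably because the $\pmb{z}$ appears with total effective exponent $1+\tfrac{\beta\epsilon}{h}>0$ after moving terms, guaranteeing a genuine contraction rather than mere nonexpansiveness); (iii) invoke completeness of $(\mathbb{R}_{>0}^N/\!\sim,\, d_H)$ — i.e., the Birkhoff–Bushell contraction mapping theorem on the cone — to get a unique fixed point $\pmb{z}^{\mathrm{opt}}$ up to scaling, then argue the scaling is pinned down by the constraint $\pmb{q}\odot(\pmb{\Gamma}_k\pmb{z}) = \pmb{\varrho}_{k-1}$ together with $\pmb{z}\odot(\pmb{\Gamma}_k^\top\pmb{q})$ matching the prescribed mass, so the pair $(\pmb{q}^{\mathrm{opt}},\pmb{z}^{\mathrm{opt}})$ is unique in $\mathbb{R}_{>0}^N\times\mathbb{R}_{>0}^N$; (iv) finally, since \eqref{rhoUpdate} expresses $\pmb{\varrho}_k$ as a continuous (indeed bilinear) function of $(\pmb{q}^{\mathrm{opt}},\pmb{z}^{\mathrm{opt}})$, uniqueness of the fixed point immediately gives uniqueness of the proximal update.

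The hard part will be step (ii): carefully bookkeeping the exponents and contraction factors through the composition so that the net Lipschitz constant in the Hilbert metric is provably strictly below $1$, and handling the regime of parameters $(\beta,\epsilon,h)$ where the power-map factor $\tfrac{\beta\epsilon}{h}$ could a priori be large. The resolution is that the $\pmb{z}$-dependence on both sides of \eqref{zupdate} must be consolidated before estimating — rewriting \eqref{zupdate} as $\pmb{z}^{1+\beta\epsilon/h} = \pmb{\xi}_{k-1}\oslash(\pmb{\Gamma}_k^\top\pmb{q})$ and then taking the $\tfrac{1}{1+\beta\epsilon/h}$ power — so that the effective exponent acting on the contracted quantity is $\tfrac{1}{1+\beta\epsilon/h}<1$, which combines with $\kappa(\pmb{\Gamma}_k)<1$ to give a contraction uniformly in the parameters. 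A secondary subtlety is confirming that iterates never leave the open cone $\mathbb{R}_{>0}^N$ (immediate from positivity of $\pmb{\Gamma}_k$, $\pmb{\xi}_{k-1}$, and $\pmb{\varrho}_{k-1}$) so that the Hilbert-metric machinery applies throughout; this is routine but should be stated. I would also remark that this argument is the finite-dimensional analogue of the one in \citet{caluya2019gradient} and reduces to the classical Sinkhorn convergence when $\beta^{-1}\to 0$ removes the entropic term.
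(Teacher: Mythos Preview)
Your approach is essentially the same as the paper's: both establish convergence of the $(\pmb{q},\pmb{z})$ recursion via a contraction argument on the positive cone, citing \citet[Theorem 3]{caluya2019gradient} as the key ingredient, and both observe that positivity of $\pmb{\Gamma}_k$ (from finiteness of $\pmb{C}_k$) is what drives the contraction. The paper works directly in Thompson's part metric rather than the Hilbert projective metric; since Thompson's metric is a genuine metric on $\mathbb{R}^N_{>0}$ (not merely on rays), this choice sidesteps your step (iii) entirely and yields a unique fixed \emph{point} straight from Banach's theorem.

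One small correction to your outline: in step (iii) the scale of $\pmb{z}^{\mathrm{opt}}$ is not pinned down by a mass constraint but by the sub-homogeneity of the consolidated map. After your rewriting, $\mathcal{T}(c\pmb{z}) = c^{\alpha}\,\mathcal{T}(\pmb{z})$ with $\alpha = \tfrac{1}{1+\beta\epsilon/h}<1$, so if $\pmb{z}^\star$ is fixed then $\mathcal{T}(c\pmb{z}^\star) = c^\alpha \pmb{z}^\star \neq c\pmb{z}^\star$ unless $c=1$; equivalently, an $\alpha$-homogeneous order-preserving map with $\alpha<1$ is already a strict contraction in Thompson's metric, which is the cleaner route the paper takes.
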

\begin{proof}
Notice that the mappings $(\pmb{q}(:,\ell),\pmb{z}(:,\ell))\mapsto (\pmb{q}(:,\ell+1),\pmb{z}(:,\ell+1))$ given in lines 11 and 12 in Algorithm \ref{alg:proximal}, are cone preserving since these mappings preserve the product orthant $\mathbb{R}_{>0}^N \times \mathbb{R}_{>0}^N$. This is a direct consequence of the definition of $\pmb{q},\pmb{z}$ in terms of $\pmb{\lambda}_0,\pmb{\lambda_1}$.

Now the idea is to show that the recursions in lines 11 and 12 in Algorithm \ref{alg:proximal}, as composite nonlinear maps, are in fact contractive w.r.t. a suitable metric on this cone. Following \citet[Theorem 3]{caluya2019gradient}, the $\pmb{z}$ iteration given in line 11 in Algorithm \ref{alg:proximal}, \textproc{ProxLearn}, for $\ell=1,2,\ldots$, is strictly contractive in the Thompson's part metric \citep{thompson1963certain} and thanks to the Banach contraction mapping theorem, converges to a unique fixed point $\pmb{z}^{\rm{opt}}\in \mathbb{R}^{N}_{>0}$. 

We note that our definition of $\pmb{\xi}_{k-1}$ is slightly different compared to the same in \citet[Theorem 3]{caluya2019gradient}, but this does not affect the proof. From definition of $\pmb{C}_{k}$, we have $\pmb{C}_{k}\in[0,\infty)$  which implies $\pmb{\Gamma}_{k} (i, j) \in (0, 1]$. Therefore, $\pmb{\Gamma}_{k}$ is a positive linear map for each $k\in\mathbb{N}$. Thus, by (linear) Perron-Frobenius theorem, the linear maps $\pmb{\Gamma}_{k}$ are contractive.
Consequently the $\pmb{q}$ iterates also converge to unique fixed point $\pmb{q}^{\rm{opt}}\in\mathbb{R}^{N}_{>0}$.

Since converged pair $\left(\pmb{q}^{\mathrm{opt}}, \pmb{z}^{\mathrm{opt}}\right) \in \mathbb{R}_{>0}^N \times \mathbb{R}_{>0}^N$ is unique, so is the proximal update (\ref{rhoUpdate}), i.e., the evaluation at line 19 in Algorithm \ref{alg:proximal}.
\end{proof}
We next discuss the implementation details for the proposed \textproc{ProxLearn} algorithm.

\subsection{Implementation of \textproc{ProxLearn}}
We start by emphasizing that \textproc{ProxLearn} updates both the parameter sample locations $\pmb{\theta}_{k}^{i}$ and the joint PDF values $\varrho_{k}^{i}$ evaluated at those locations, without gridding the parameter space. In particular, the PDF values are updated online, not as an offline \emph{a posteriori} function approximation as in traditional Monte Carlo algorithms.

We will apply \textproc{ProxLearn}, as outlined here, in Sec. \ref{sec:NumericalExperimentsBinary}. In Sec. \ref{sec:NumericalExperimentsMulti}, we will detail additional modifications of \textproc{ProxLearn} to showcase its flexibility.

Required inputs of \textproc{ProxLearn} are the inverse temperature $\beta$, the time step-size $h$, a regularization parameter $\varepsilon$, and the number of samples $N$. Additional required inputs are the training feature data $\pmb{X}:=\left[\pmb{x}_1 \ldots \pmb{x}_{n_{\text{data}}}\right]^{\top}\in \mathbb{R}^{n_{\text{data}}\times n_x}$ and 
the corresponding training labels $\pmb{y}:=\left[y_1 \ldots y_{n_{\text{data}}}\right]^{\top} \in \mathbb{R}^{n_{\text{data}}}$, as well the weighted point cloud $\{\pmb{\theta}_{k-1}^{i}, \varrho_{k-1}^{i}\}_{i=1}^{N}$ for each $k\in\mathbb{N}$. Furthermore, \textproc{ProxLearn} requires two internal parameters as user input: the numerical tolerance $\delta$, and the maximum number of iterations $L$. 

For $k\in\mathbb{N}$, let
\begin{align*}
&\pmb{\Theta}_{k-1} :=\!\begin{pmatrix} (\pmb{\theta}_{k-1}^{1})^{\top}\\ (\pmb{\theta}_{k-1}^{2})^{\top}\\ 
 \vdots\\
 (\pmb{\theta}_{k-1}^{N})^{\top} \end{pmatrix}\!   \in \mathbb{R}^{N \times p},\quad \pmb{\varrho}_{k-1} := \!\begin{pmatrix}
\varrho_{k-1}^{1}\\
\varrho_{k-1}^{2}\\
\vdots\\
\varrho_{k-1}^{N}\\
\end{pmatrix}\!\in\mathbb{R}^{N}_{>0}.
\end{align*}

\begin{figure}[t]
\centering
\includegraphics[width=0.4\linewidth]{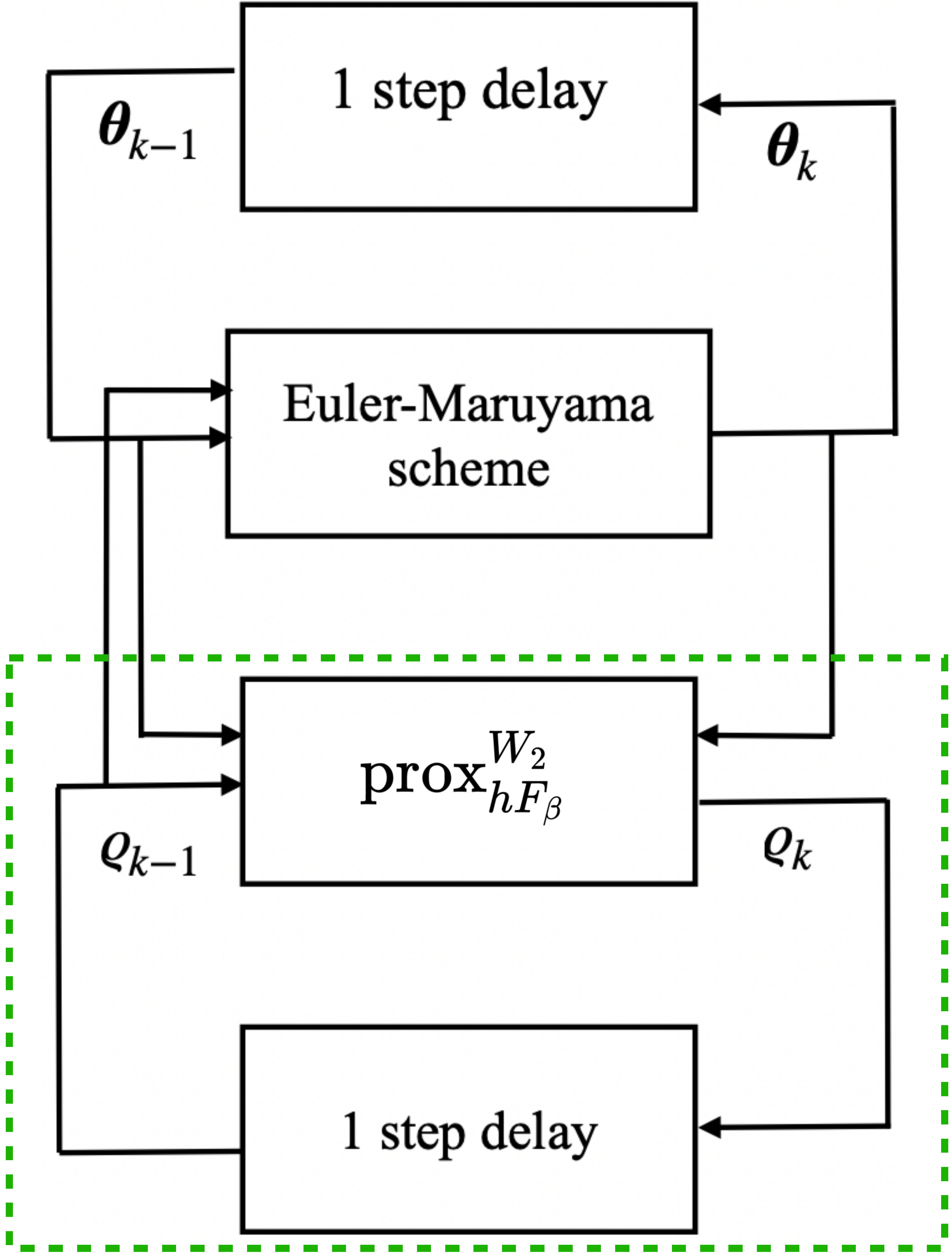}
\caption{Schematic of the proposed proximal algorithm for mean field learning, updating scattered point cloud $\left\{\pmb{\theta}_{k-1}^{i}, \varrho_{k-1}^{i}\right\}_{i=1}^{N}$ for $k\in\mathbb{N}$. The location of the points $\left\{\pmb{\theta}_{k-1}^{i}\right\}_{i=1}^{N}$ are updated via the Euler-Maruyama scheme; the corresponding probability weights are computed via proximal updates highlighted within the dashed box. Explicitly performing the proximal updates via the proposed algorithm, and thereby enabling mean filed learning as an interacting weighted particle system, is our novel contribution}.
\label{fig:scheme}
\end{figure}
In line 2 of Algorithm \ref{alg:proximal}, \textproc{ProxLearn} updates the locations of the parameter vector samples $\pmb{\theta}_{k}^{i}$ in $\mathbb{R}^{p}$ via Algorithm \ref{alg:euler}, \textproc{EulerMaruyama}. This location update takes the form:
\begin{align}  \pmb{\theta}_{k}^{i}=\pmb{\theta}_{k-1}^{i}-h \nabla\left(V\left(\pmb{\theta}_{k-1}^{i}\right)+\omega\left(\pmb{\theta}_{k-1}^{i}\right)\right) + \sqrt{2 \beta^{-1}}\left(\pmb{\eta}_{k}^{i}-\pmb{\eta}_{k-1}^{i}\right),
   \label{SDE_discrite}
\end{align}
where $\omega(\cdot):=\int U(\cdot,\tilde{\pmb{\theta}}) \varrho(\tilde{\pmb{\theta}}) \mathrm{d} \tilde{\pmb{\theta}}$, and $\pmb{\eta}_{k-1}^{i}:=\pmb{\eta}^{i}(t=(k-1) h), \:\forall k \in \mathbb{N}$.

To perform this update, \textproc{EulerMaruyama} constructs a matrix $\pmb{P}_{k-1}$ whose $(i,j)$th element is $\pmb{P}_{k-1}(i,j) = \Phi(\pmb{x}_{j}, \pmb{\theta}_{k-1}^{i})$. From $\pmb{P}_{k-1}$, we construct $\pmb{v}_{k-1}$ and $\pmb{U}_{k-1}$ as in lines 3 and 5.
In line 6 of Algorithm \ref{alg:euler}, \textproc{EulerMaruyama} uses the automatic differentiation module of PyTorch Library, \textproc{Backward} \citep{paszke2017automatic}, to calculate the gradients needed to update $\pmb{\Theta}_{k-1}$ to $\pmb{\Theta}_{k}$ $\forall k\in\mathbb{N}$.



Once $\pmb{\Theta}_{k}$, $\pmb{v}_{k-1}$, and $\pmb{U}_{k-1}$ have been constructed via \textproc{EulerMaruyama}, \textproc{ProxLearn} maps the $N\times 1$ vector $\pmb{\varrho}_{k-1}$ to the proximal update $\pmb{\varrho}_{k}$. 


\begin{algorithm}[t]
\caption{Proximal Algorithm}\label{alg:proximal}
\begin{algorithmic}[1]
\Procedure{ProxLearn}{$\pmb{\varrho}_{k-1}, \pmb{\Theta}_{k-1},\beta, h, \varepsilon, N,\pmb{X}, \pmb{y}, \delta, L$}
\State $\pmb{v}_{k-1},~ \pmb{U}_{k-1},~\pmb{\Theta}_{k} \leftarrow$ \textproc{EulerMaruyama}$(h, \beta, \pmb{\Theta}_{k-1}, \pmb{X}, \pmb{y}, \pmb{\varrho}_{k-1})$ \Comment{Update the location of the samples}
\State $\pmb{C}_{k}(i, j) \leftarrow \left\|\pmb{\theta}_{k}{ }^{i}-\pmb{\theta}_{k-1}^{j}\right\|_{2}^{2}$
\State  $\pmb{\Gamma}_{k} \leftarrow {\rm{exp}}(-\pmb{C}_{k}/2\varepsilon)$  \Comment{Lines 4-8 define the terms needed in re-expressing (\ref{recursion2}) as (\ref{zupdate})}
\State $\pmb{\xi}_{k-1} \leftarrow \exp(- \beta \pmb{v}_{k-1} - \beta \pmb{U}_{k-1} \pmb{\varrho}_{k-1} - \pmb{1})$
\State $\pmb{z}_{0} \leftarrow {\rm{rand}}_{N \times 1}$
\State $\pmb{z} \leftarrow [\pmb{z}_{0}, \pmb{0}_{{N} \times (L-1)}]$
\State $\pmb{q} \leftarrow [\pmb{\varrho}_{k-1} \oslash (\pmb{\Gamma_{k}}\pmb{z_{0}}), \pmb{0}_{N \times (L-1)}]$
\State $\ell = 1$
\While{ $\ell\leq L$ }
\State $\pmb{z}(:, \ell+1) \leftarrow (\pmb{\xi}_{k-1} \oslash (\pmb{\Gamma}_{k}^{\top}\pmb{q}(:,\ell))^{\frac{1}{1 + \beta \varepsilon / h}}$ \Comment{Following (\ref{zupdate})}
\State $\pmb{q}(:, \ell+1) \leftarrow \pmb{\varrho}_{k-1} \oslash (\pmb{\Gamma}_{k}\pmb{z}(:,\ell+1))$ \Comment{Following (\ref{yupdate})}
\If{$||\pmb{q}(:,\ell+1) - \pmb{q}(:,\ell)||<\delta$ and $||\pmb{z}(:,\ell+1) -\pmb{z}(:,\ell)||<\delta$}  
\State Break
\Else
\State $\ell \leftarrow \ell + 1$
\EndIf
\EndWhile
\State \Return $\pmb{\varrho}_{k} \leftarrow \pmb{z}(:,\ell) \odot (\pmb{\Gamma}_{k}^{\top} \pmb{q}(:,\ell))$ \Comment{Use (\ref{rhoUpdate}) to map $\pmb{\varrho}_{k-1}$ to $\pmb{\varrho}_{k}$}
\EndProcedure
\end{algorithmic}
\end{algorithm}

\begin{algorithm}[t]
\caption{Euler-Maruyama Algorithm}\label{alg:euler}
\begin{algorithmic}[1]
\Procedure{EulerMaruyama}{$h, \beta, \pmb{\Theta}_{k-1}, \pmb{X}, \pmb{y}, \pmb{\varrho}_{k-1}$} 
\State $\pmb{P}_{k-1} \leftarrow \pmb{\Phi}(\pmb{\Theta}_{k-1}, \pmb{X})$ \Comment{Lines 2-4 construct the argument of the gradient in (\ref{SDE_discrite})}
\State $\pmb{U}_{k-1} \leftarrow 1/n_{\text{data}} \pmb{P}_{k-1} \pmb{P}_{k-1}^{\top}$
\State $\pmb{u}_{k-1} \leftarrow \pmb{U}_{k-1} \pmb{\varrho}_{k-1}$
\State $\pmb{v}_{k-1} \leftarrow -2/n_{\text{data}} \pmb{P}_{k-1} \pmb{y}$
\State $\pmb{D}\leftarrow \textproc{Backward} \left(\pmb{u}_{k-1} + \pmb{v}_{k-1} \right)$ \Comment{Approximate the gradient of (\ref{SDE_discrite}) using PyTorch library  \textproc{Backward} \citep{paszke2017automatic}}
\State $\pmb{G} \leftarrow \sqrt{2h/\beta} \times \text{randn}_{N \times p}$
\State $\pmb{\Theta}_{k} \leftarrow \pmb{\Theta}_{k-1} + h \times \pmb{D}  + \pmb{G}$ \Comment{Complete the location update via (\ref{SDE_discrite})}
\EndProcedure
\end{algorithmic}
\end{algorithm}

We next illustrate the implementation of \textproc{ProxLearn} for binary and multi-class classification case studies. A GitHub repository
containing our code for the implementation of these
applications can be found at \url{https://github.com/zalexis12/Proximal-Mean-Field-Learning.git}. Please refer to the Readme file therein for an outline of the structure of our code.

\section{Case study: binary classification}\label{sec:NumericalExperimentsBinary}
In this Section, we report numerical results for our first case study, where we apply the proposed \textproc{ProxLearn} algorithm for binary classification.

For this case study, we perform two implementations on different computing platforms. Our first implementation is on a PC with 3.4 GHz  6-Core Intel Core i5 processor, and 8 GB RAM. For runtime improvement, we then use a Jetson TX2 with a NVIDIA Pascal GPU with 256 CUDA cores, 64-bit NVIDIA Denver and ARM Cortex-A57
CPUs.

\subsection{WDBC data set} 
We apply the proposed algorithm to perform a binary classification on the Wisconsin Diagnostic Breast Cancer (henceforth, WDBC) data set available at the UC Irvine machine learning repository \citep{semeion}.  This data set consists of the data of scans from $569$ patients. There are $n_{x} = 30$ features from each scan. Scans are classified as ``benign'' (which we label as $-1$) or ``malignant'' (labeled as $+1$).

In (\ref{FiniteRepresentation}), we define 
$\Phi\left(\pmb{x}, \pmb{\theta}_{k-1}^{i}\right):=a_{k-1}^{i} \tanh(\langle \pmb{w}_{k-1}^{i}, \pmb{x}\rangle+ b_{k-1}^{i})$ $\forall i\in[N]$ after $(k-1)$ updates. The parameters $a_{k-1}^{i},\pmb{w}_{k-1}^{i}$ and $b_{k-1}^{i}$ are the scaling, weight and bias of the $i^{\text{th}}$ sample after $(k-1)$ updates, respectively. Letting  $p:=n_{x}+2$, the parameter vector of the $i^{\text{th}}$ sample after $(k-1)$ updates is 
$$\pmb{\theta}_{k-1}^{i}:=\left(\begin{array}{c}
a_{k-1}^{i} \\
b_{k-1}^{i} \\
\pmb{w}_{k-1}^{i}
\end{array}\right) \in \mathbb{R}^{p}, \quad \forall\:i\in[N].$$

We set $\varrho_{0} \equiv \text{Unif}\left( [0.9, 1.1]\times [-0.1, 0.1] \times [-1,1]^{n_{x}}\right)$, a uniform joint PDF supported over $n_{p}=n_{x}+2=32$ dimensional mean field parameter space.

We use 70\% of the entire data set as training data. As discussed in Sec. \ref{sec:Algorithm}, we learn the mean field parameter distribution via weighted scattered point cloud evolution using \textproc{ProxLearn}. We then use the confusion matrix method \citep{visa2011confusion} to evaluate the accuracy of the obtained model over the test data, which is the remaining $30 \%$  of the full data set, containing $n_{{\rm{test}}}$ points.

 For each test point $\pmb{x}_{\rm{test}} \in \mathbb{R}^{n_{x}}$, we construct
 \begin{align*}
 \pmb{\varphi}(\pmb{x}_{\rm{test}}) := \left(\begin{array}{c}
\Phi(\pmb{x}_{\rm{test}}, \pmb{\theta}_{k-1}^{1}) \\
\Phi(\pmb{x}_{\rm{test}}, \pmb{\theta}_{k-1}^{2}) \\
\vdots \\
\Phi(\pmb{x}_{\rm{test}}, \pmb{\theta}_{k-1}^{N})
\end{array}\right) \in \mathbb{R}^{N}
 \end{align*}
where $\pmb{\theta}_{k-1}^{i}$ is obtained from the training process. We estimate $f_{\rm{MeanField}}$ in (\ref{parametrized_function}) in two ways. First, we estimate $f_{\rm{MeanField}}$ as a sample average of the elements of $\pmb{\varphi}$. Second, we estimate $f_{\rm{MeanField}}$ by numerically approximating the integral in (\ref{parametrized_function}) using the propagated samples $\{ \pmb{\theta}_{k-1}^{i},\varrho_{k-1}^{i}\}_{i=1}^{N}$ for $k\in\mathbb{N}$. We refer to these as the ``unweighted estimate'' and ``weighted estimate,'' respectively. While the first estimate is an empirical average, the second uses the weights $\{\varrho_{k-1}^{i}\}_{i=1}^{N}$ obtained from the proposed proximal algorithm. The $f_{\rm{MeanField}}$ ``unweighted estimate'' and ``weighted estimate'' are then passed through the \textproc{softmax} and \textproc{argmax} functions respectively, to produce the predicted labels.

\subsection{Numerical experiments}  
 We set the number of samples $N = 1000$, numerical tolerance $\delta = 10^{-3}$, the maximum number of iterations $L = 300$, and the regularizing parameter $\varepsilon = 1$. Additionally, we set the time step to $h = 10^{-3}$.  We run the simulation for different values of the inverse temperature $\beta$, and list the corresponding classification accuracy in Table \ref{tab:accuracy}. The ``weighted estimate,'' the ensemble average using proximal updates, produces more accurate results, whereas the ``unweighted estimate,'' the empirical average, is found to be more sensitive to the inverse temperature $\beta$. 

For each fixed $\beta$, we perform $10^{6}$ proximal recursions incurring approx. 33 hours of computational time. 
Fig. \ref{fig:risk_no_gpu} shows the risk functional, computed as the averaged loss over the test data using each of the two estimates described above.

\begin{figure*}[t]
\centering
    \begin{subfigure}[t]{0.49\linewidth}
        \centering
        \includegraphics[width=0.99\linewidth]{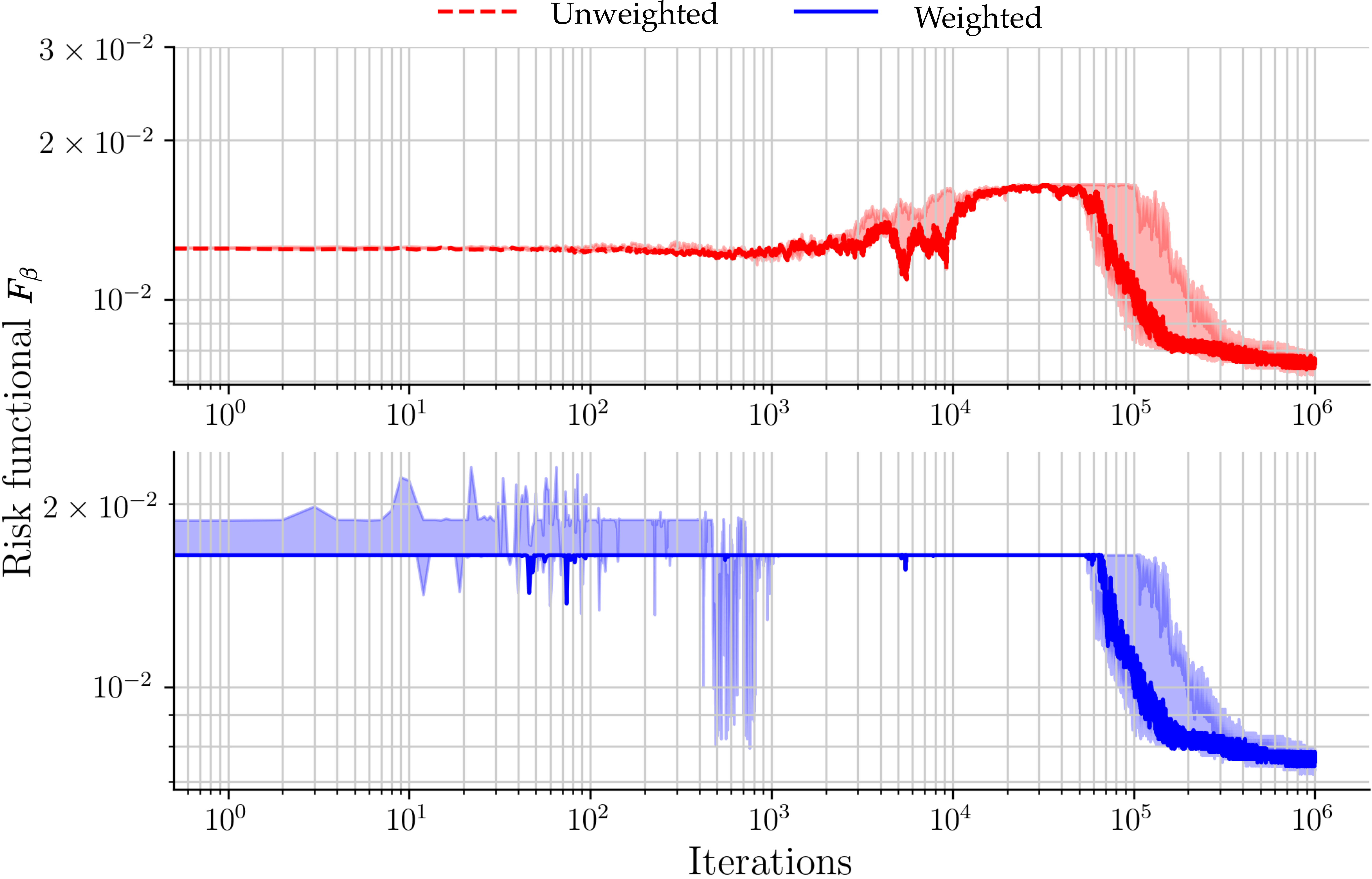}
        \caption{Regularized risk calculated via CPU.}
        \label{fig:risk_no_gpu}
    \end{subfigure}%
    ~ 
    \begin{subfigure}[t]{0.49\linewidth}
        \centering
       \includegraphics[width=0.99\linewidth]{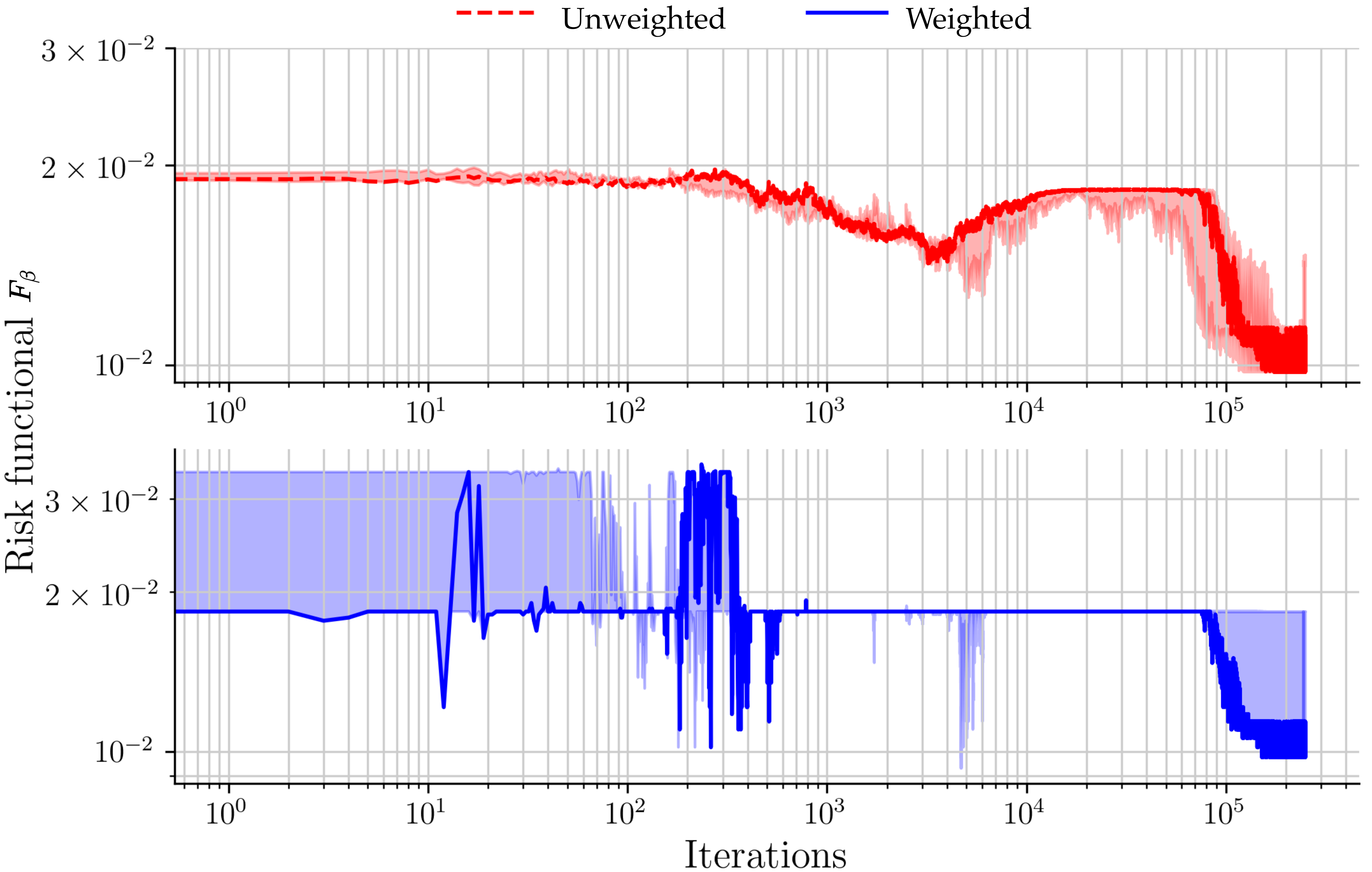}
        \caption{Regularized risk calculated via GPU.}
        \label{fig:risk_gpu}
    \end{subfigure}
    \caption{The solid line shows the regularized risk functional $F_{\beta}$ versus the number of proximal recursions shown for the WDBC dataset with $\beta=0.05$. The shadow shows the $F_{\beta}$ variation range for different values of $\beta\in\{0.03,0.05,0.07\}$.}
    \label{fig:risk}
\end{figure*}

\begin{table}[t]
\begin{center}
\caption{Classification accuracy of the proposed computational framework for the WDBC Dataset}
\begin{tabular}{| c | c | c |}
\hline
$\beta$  &Unweighted &Weighted \\ \hline 
0.03         &91.17\% &92.35\% \\
0.05             &92.94\%  &92.94\% \\
0.07             &78.23\% &92.94\%  \\
\hline
\end{tabular}
\label{tab:accuracy}
\end{center}
\end{table}

To improve the runtime of our algorithm, we run our code on a Jetson TX2 module, converting data and variables to PyTorch variables.

We begin calculations in Float32, switching to Float64 only when needed to avoid not-a-number (NaN) errors. This switch typically occurs after $2\times 10^5$ to $3\times 10^5$ iterations. As shown in Table \ref{tab:accuracy_fast}, we train the neural network to a comparable accuracy in only $2.5\times 10^5$ iterations. The new runtime is around 6\% of the original runtime for the CPU-based computation. Fig. \ref{fig:risk_gpu} shows the risk functional calculated via this updated code. We parallelize these calculations, taking advantage of the GPU capacity of the Jetson TX2. 

We utilize these improvements in runtime to additionally experiment with our choice of $\varepsilon$. Table \ref{tab:f_hat_for_eps} reports the final values of the regularized risk functional $\hat{F}_\beta$ and corresponding runtimes for varying $\varepsilon$. As expected, larger $\varepsilon$ entails more smoothing and lowers runtime. The corresponding final regularized risk values show no significant variations, suggesting numerical stability.

\begin{table}[t]
\begin{center}
\caption{Classification accuracy on Jetson Tx2, after $2.5\times 10^5$ iterations}
\begin{tabular}{| c | c | c | c |}
\hline
$\beta$  &Unweighted &Weighted &Runtime (hr)\\
\hline
0.03         &91.18\% &91.18\% &1.415\\
0.05             &91.18\%  &92.94\%  &1.533\\
0.07             &90.59\% &91.76\%  &1.704\\
\hline
\end{tabular}
\label{tab:accuracy_fast}
\end{center}
\end{table}

\begin{table}[t]
\begin{center}
\caption{Comparing final $\hat{F_{\beta}}$ and runtimes for various $\varepsilon$}
 \begin{tabular}{| c | c | c| }
\hline
$\varepsilon$  &Unweighted Final $\hat{F_{\beta}}$ & Runtime (s) \\ \hline 
0.1         &$1.4348 \times 10^{-2}$ & 32863 \\
0.5             & $1.3740 \times 10^{-2}$  & 11026 \\
1               & $1.0412 \times 10^{-2}$ & 5022\\
5             &$1.1293 \times 10^{-2}$ & 4731 \\
10             &$9.8849 \times 10^{-3}$ & 4766 \\
\hline
\end{tabular}
\label{tab:f_hat_for_eps}
\end{center}
\end{table}

\subsection{Computational complexity} 
In this case study, we determine the computational complexity of \textproc{ProxLearn} as follows. Letting 
$\pmb{a}_{k-1} := \begin{pmatrix} a_{k-1}^{1} & \hdots & a_{k-1}^{N} \end{pmatrix}^{\top}$, $\pmb{b}_{k-1} := \begin{pmatrix} b_{k-1}^{1} & \hdots & b_{k-1}^{N} \end{pmatrix}^{\top}$, $\pmb{W}_{k-1} := \begin{pmatrix} \pmb{w}_{k-1}^{1} & \hdots & \pmb{w}_{k-1}^{N} \end{pmatrix}^{\top}$, we create the $N \times n_{{\rm{data}}}$ matrix
\begin{align}
\pmb{P}_{k-1} := \!\left( \pmb{a}_{k-1} \pmb{1}^{\top} \right)\!\odot\! \tanh(\pmb{W}_{k-1}\pmb{X}^{\top} + \pmb{b}_{k-1}\pmb{1}^{\top}),
\end{align}
which has complexity $O(n_{\rm{data}}  N  n_{x})$.  The subsequent creation of matrix $\pmb{U}_{k-1}$ in line 3 of Algorithm \ref{alg:euler} has complexity $O(n_{\rm{data}} N^{2})$, and creating $\pmb{v}_{k-1}$ and $\pmb{u}_{k-1}$ takes complexity $O(N n_{\rm{data}})$ and $O(N^{2})$ respectively.

The complexity in calculating the relevant derivatives of $\pmb{v}_{k-1}$ and $\pmb{u}_{k-1}$ is $O(N^{2} n_{\rm{data}} n_{x})$ (these derivatives are calculated in Appendices \ref{sec: vderivatives} and \ref{sec:uderivatives}).  Updating $\pmb{\Theta}_{k-1}$ using these results has complexity of $O(N p) = O(N n_{x})$. Therefore, the process of updating $\pmb{\Theta}_{k-1}$ via \textproc{EulerMaruyama} is $O(N^{2}  n_{\rm{data}} n_{x})$.

The significant complexity in the remainder of \textproc{ProxLearn} arises in the construction of matrix $\pmb{C}_{k}$ in line 3 and the matrix-vector multiplications within the while loop in lines 11, 12.

The creation of the $N \times N$ matrix $\pmb{C}_{k}$, in which each element is the vector norm of a $n_{x} \times 1$ vector, is $O(n_{x} N^{2})$. In a worst-case scenario, the while loop runs $L$ times.  The operations of leading complexity within the while loop are the multiplications of the $\pmb{\Gamma}_{k}$ matrix of size $N \times N$ with the $N \times 1$ vectors, which have a complexity of $O(N^{2})$.  Therefore, the while loop has a complexity of $O(L N^{2})$. 

Updating $\pmb{\varrho}_{k-1}$ thus has a complexity of $O((n_{x} + L) N^{2})$.  In practice, the while loop typically ends far before reaching the maximum number of iterations $L$.

From this analysis, we find that the overall complexity of \textproc{ProxLearn} is $O(N^{2} (n_{{\rm{data}}}  n_{x} + L))$. In comparison, the per iteration complexity for JKO-ICNN is $O\left(N_{\text{inner}}\left(N+1\right)N_{\text{batch}} + N^3\right)$ where $N_{\text{inner}}$ denotes the number of inner optimization steps, and $N_{\text{batch}}$ denotes the batch size. The per iteration complexity for SWGF is $O\left(N_{\text{inner}}N_{\text{proj}}N_{\text{batch}}\log N_{\text{batch}}\right)$ where $N_{\text{proj}}$ denotes the number of projections to approximate the sliced Wasserstein distance.

\subsection{Comparisons to existing results}

\begin{table*}[t]
\begin{center}
\caption{Comparison of average classification accuracy from \citet[Table 1]{bonet2022efficient} to our algorithm, ProxLearn} 
\begin{tabular}{| c | c | c | c | c |}
\hline
Dataset  &JKO-ICNN &SWGF + RealNVP &ProxLearn, Weighted &ProxLearn, Unweighted \\
\hline
Banana & $0.550 \pm 10^{-2}$ & $0.559 \pm 10^{-2}$ & $0.551 \pm 10^{-2}$ & $0.535 \pm 5 \cdot 10^{-2}$ \\
\hline
Diabetes & $0.777 \pm 7 \cdot 10^{-3}$ & $0.778 \pm 2 \cdot 10^{-3}$ & $0.736 \pm 2 \cdot 10^{-2}$ & $0.731 \pm 10^{-2}$ \\
\hline
Twonorm & $0.981 \pm 2 \cdot 10^{-4}$ & $0.981 \pm 6 \cdot 10^{-4}$ & $0.972 \pm 2 \cdot 10^{-3}$ & $0.972 \pm 2 \cdot 10^{-3}$\\
\hline
\end{tabular}
\vspace*{0.1in}
\label{tab:accuracy_comparison}
\end{center}
\end{table*}

From Fig. \ref{fig:risk}, we observe that there is a significant burn in period for the risk functional curves. These trends in learning curves agree with those reported in \citep{mei2018mean}. In particular, \citep[Fig. 3]{mei2018mean} and Fig. 7.3 in that reference's \emph{Supporting Information}, show convergence trends very similar to our Fig. \ref{fig:risk}: slow decay until approx. $10^{5}$ iterations and then a significant speed up. The unusual convergence trend was explicitly noted in \citep{mei2018mean}: ``We  observe that SGD  converges to a network with very small risk, but this convergence has a nontrivial structure and presents long flat regions''. It is interesting to note that \citep{mei2018mean} considered an experiment that allowed rotational symmetry and simulated the radial (i.e., with one spatial dimension) discretized PDE, while we used the proposed proximal recursion directly in the neuron population ensemble to solve the PDE IVP, i.e., similar convergence trends were observed using different numerical methods applied to the same mean field PDE IVP. This makes us speculate that the convergence trend is specific to the mean field PDE dynamics itself, and is less about the particular numerical algorithm. This observation is consistent with recent works such as \citep{wojtowytsch2020can} which investigate the mean field learning dynamics in two layer ReLU networks and in that setting, show that the learning can indeed be slow depending on the target function.

As a first study, our numerical results achieve reasonable classification accuracy compared to the state-of-art, even though our proposed meshless proximal algorithm is very different from the existing implementations. We next compare the numerical performance with existing methods as in \citet{mokrov2021large} and \citet{bonet2022efficient}. We apply our proximal algorithm for binary classification to three datasets also considered in \citet{mokrov2021large} and \citet{bonet2022efficient}: the banana, diabetes, and twonorm datasets.  

The banana dataset consists of 5300 data points, each with $n_x = 2$ features, which we rescale to lie between 0 and 8. We set $\beta = 0.05$, draw our initial weights $\pmb{w}$ from $\text{Unif}\left( [-2,2]^{n_{x}}\right)$ and bias $b$ from $\text{Unif}\left( [-0.3,0.3]\right)$, and set  $\pmb{\varrho}_{0} \equiv \text{Unif}\left( 0, 1000 \right)$. We run our code for $3500$ iterations, splitting the data evenly between test and training data. 

The diabetes dataset consists of $n_x = 8$ features from each of $768$ patients. Based on our experimental results, we make the following adjustments to our algorithm: we redefine $\beta = 0.65$, $\pmb{\varrho}_{0} \equiv \text{Unif}\left( 0, 1000 \right)$, and draw our initial weights $\pmb{w}$ from $\text{Unif}\left( [-2,2]^{n_{x}}\right)$. We rescale the data to lie between 0 and 1, and use half of the dataset for training purposes, and the remainder as test data. In this case, we run our code for $4.99 \times 10^5$ iterations.

The twonorm dataset consists of 7,400 samples drawn from two different normal distributions, with $n_x = 20$ features. We again consider 50\% of the same as training data and used the remaining 50\% as test data, and rescale the given data by a factor of 8. Based on our empirical observations, we redefine $\beta = 1.95$, and once more draw our initial weights $\pmb{w}$ from $\text{Unif}\left( [-2,2]^{n_{x}}\right)$ and set $\pmb{\varrho}_{0} \equiv \text{Unif}\left( 0, 1000 \right)$. In this case, we perform $10^4$ proximal recursions in each separate run.

We run our code five times for each of the three datasets under consideration and compute ``unweighted estimates'' and ``weighted estimates'' in each case, as described above. These estimates assign each data point a value: negative values predict the label as 0, while positive values predict the label as 1. From these results, we calculate the weighted and unweighted accuracy by finding the percentage of predicted test labels that match the actual test labels. The average accuracy over all five runs is reported in Table \ref{tab:accuracy_comparison}, alongside the results reported in \citet[Table 1]{bonet2022efficient}. We achieve comparable accuracy to these recent results.

\section{Case study: multi-class classification}\label{sec:NumericalExperimentsMulti}

We next apply the proposed proximal algorithm to a ten-class classification problem using the Semeion Handwritten Digit (hereafter SHD) Data Set \citep{semeion}. This numerical experiment is performed on the aforementioned Jetson TX2.

\subsection{SHD data set} 
The SHD Data Set consists of 1593 handwritten digits. By viewing each digit as $16 \times 16$ pixel image, each image is represented by $n_x= 16^{2} =256$ features.  Each feature is a boolean value indicating whether a particular pixel is filled. We subsequently re-scale these features such that $\pmb{x}_{i} \in \{-1, 1\}^{n_{x}}$. 

\subsection{Adaptations to \textproc{ProxLearn} for multi-class case} 
To apply \textproc{ProxLearn} for a multi-class case, we make several  adaptations. For instance, rather than attempting to determine $f(\pmb{x}) \approx y$, we redefine $f(\pmb{x})$ to represent a mapping of input data to the predicted likelihood of the correct label. We therefore redefine the variables, parameters, and risk function as follows.

Each label is represented by a $1 \times 10$ vector of booleans, stored in a $n_{\rm{data}} \times 10$ matrix $\pmb{Y}$ where $Y_{i,j} = 1$ if the $i$\textsuperscript{th} data point $\pmb{x}_{i}$ has label $j$, and $Y_{i,j} = 0$ otherwise.

We construct the $N \times n_{\rm{data}}$ matrix $\pmb{P}_{k-1}$ by defining the $(j, i)$ element of $\pmb{P}_{k-1}$ as
\begin{align}
&\pmb{P}_{k-1}(j,i) := \pmb{\Phi}(\pmb{\theta}_{k-1}^{j}, \pmb{X}(i,:),\pmb{Y}(i,:)) \nonumber\\
&:= \bigg\langle\textproc{softmax}(\pmb{X}(i,:)(\pmb{\theta}_{k-1}^{j})^{\top}), \left(\pmb{Y}(i,:)\right)^{\top}\bigg\rangle
\label{newphi}
\end{align}
where $\langle\cdot,\cdot\rangle$ denotes the standard Euclidean inner product. The \textproc{softmax} function in (\ref{newphi}) produces a $10\times 1$ vector of non-negative entries that sum to 1.  By taking the inner product of this vector with the Boolean vector $\pmb{Y}(i,:)$, we define $\pmb{P}_{k-1}(j,i) = \pmb{\Phi}(\pmb{\theta}_{k-1}^{j}, \pmb{X}(i,:),\pmb{Y}(i,:))$ as the perceived likelihood that the data point $i$ is labeled correctly by sample $j$. As our model improves, this value approaches $1$, which causes the probability of an incorrect label to drop.

As this newly defined $\pmb{P}_{k-1}$ does not call for bias or scaling, the weights alone are stored in the $N \times p$ matrix  $\pmb{\Theta}_{k-1}$.  In this case, $p:=10n_{x}$, as each of the $n_{x}$ features requires a distinct weight for each of the ten labels. For convenience, we reshape $\pmb{\Theta}_{k-1} = (\pmb{\theta}_{k-1}^{1}, . . . , \pmb{\theta}_{k-1}^{N})$, where each $\pmb{\theta}_{k-1}^{i}$ is a $10 \times n_{x}$ matrix.  Therefore, $\pmb{\Theta}_{k-1}$ is a $10 \times n_{x} \times N$ tensor.

We redefine the unregularized risk to reflect our new $\pmb{\Phi}$ as follows:
\begin{align}
F(\rho) := \mathbb{E}_{(\pmb{x},\pmb{y})}\left(1 - \int_{\mathbb{R}^{p}}\Phi(\pmb{x}, \pmb{y},\pmb{\theta}) \rho(\pmb{\theta})\differential\pmb{\theta} \right)^{2}.
\label{newrisk}
\end{align}
Expanding the above, we arrive at a form that resembles (\ref{f(rho)}), now with the following adjusted definitions:
\begin{align}
F_{0} &:= 1, \\
V(\pmb{\theta}) &:= \mathbb{E}_{(\pmb{x},\pmb{y})} [-2 \Phi(\pmb{\theta}, \pmb{x}, \pmb{y})], \\
U(\pmb{\theta}, \tilde{\pmb{\theta}}) &:= \mathbb{E}_{(\pmb{x},\pmb{y})}[\Phi(\pmb{\theta}, \pmb{x}, \pmb{y}) \Phi(\tilde{\pmb{\theta}}, \pmb{x}, \pmb{y})].
\end{align}
We use the regularized risk functional $F_{\beta}$ as in (\ref{risk_functional_reg}) where $F$ now is given by (\ref{newrisk}). Due to the described changes in the structure of $\pmb{\Theta}_{k-1}$, the creation of $\pmb{C}_{k}$ in line 3 of \textproc{ProxLearn} results in a $10 \times N \times N$ tensor, which we then sum along the ten element axis, returning an $N \times N$ matrix.

Finally, we add a scaling in line 7 of \textproc{EulerMaruyama}, scaling the noise by a factor of 1/100.

\subsection{Numerical experiments} 
With the adaptations mentioned above, we set the inverse temperature $\beta = 0.5$, $\epsilon = 10$, the step size $h = 10^{-3}$, and $N = 100$. We draw the initial weights from  $\text{Unif}\left( [-1,1]^{10 n_{x}}\right)$. We take the first $n_{\text{data}} = 1000$ images as training data, reserving the remaining $n_{\rm{test}} = 593$ images as test data, and execute 30 independent runs of our code, each for $10^{6}$ proximal recursions.

To evaluate the training process, we create a matrix $\pmb{P}_{k-1}^{\rm{test}} \in \mathbb{R}^{N \times n_{\rm{test}}}$, using test data rather than training data, but otherwise defined as in (\ref{newphi}). We then calculate a weighted approximation of $F_{\beta}$:
\begin{align}
F_{\beta} \approx \frac{1}{n_{\rm{test}}} \bigg\lVert \pmb{1} - (\pmb{P}_{k-1}^{\rm{test}})^{\top} \pmb{\varrho} \bigg\rVert_{2}^{2},
\end{align}
and an unweighted approximation:
\begin{align}
F_{\beta} \approx \frac{1}{n_{\rm{test}}} \bigg\lVert \pmb{1} - \frac{1}{N} (\pmb{P}_{k-1}^{\rm{test}})^{\top} \pmb{1} \bigg\rVert_{2}^{2},
\end{align}
which we use to produce the risk and weighted risk log-log plots shown in Fig. \ref{fig:risk_s}. 

Notably, despite the new activation function and the adaptations described above, our algorithm produces similar risk plots in the binary and multi-class cases.  The run time for $10^{6}$ iterations is approximately 5.3 hours.  

To evaluate our multi-class model, we calculate the percentage of accurately labeled test data by first taking \textproc{argmax} $(\pmb{X}_{\rm{test}}\pmb{\Theta}_{k-1})$ along the ten dimensional axis, to determine the predicted labels for each test data point using each sample of $\pmb{\Theta}_{k-1}$. We then compare these predicted labels with the actual labels. We achieved $61.079\%$ accuracy for the test data, and $75.330\%$ accuracy for the training data.  

\begin{figure}[t]
\vspace{.3in}
\centerline{\includegraphics[width = 0.6\linewidth]{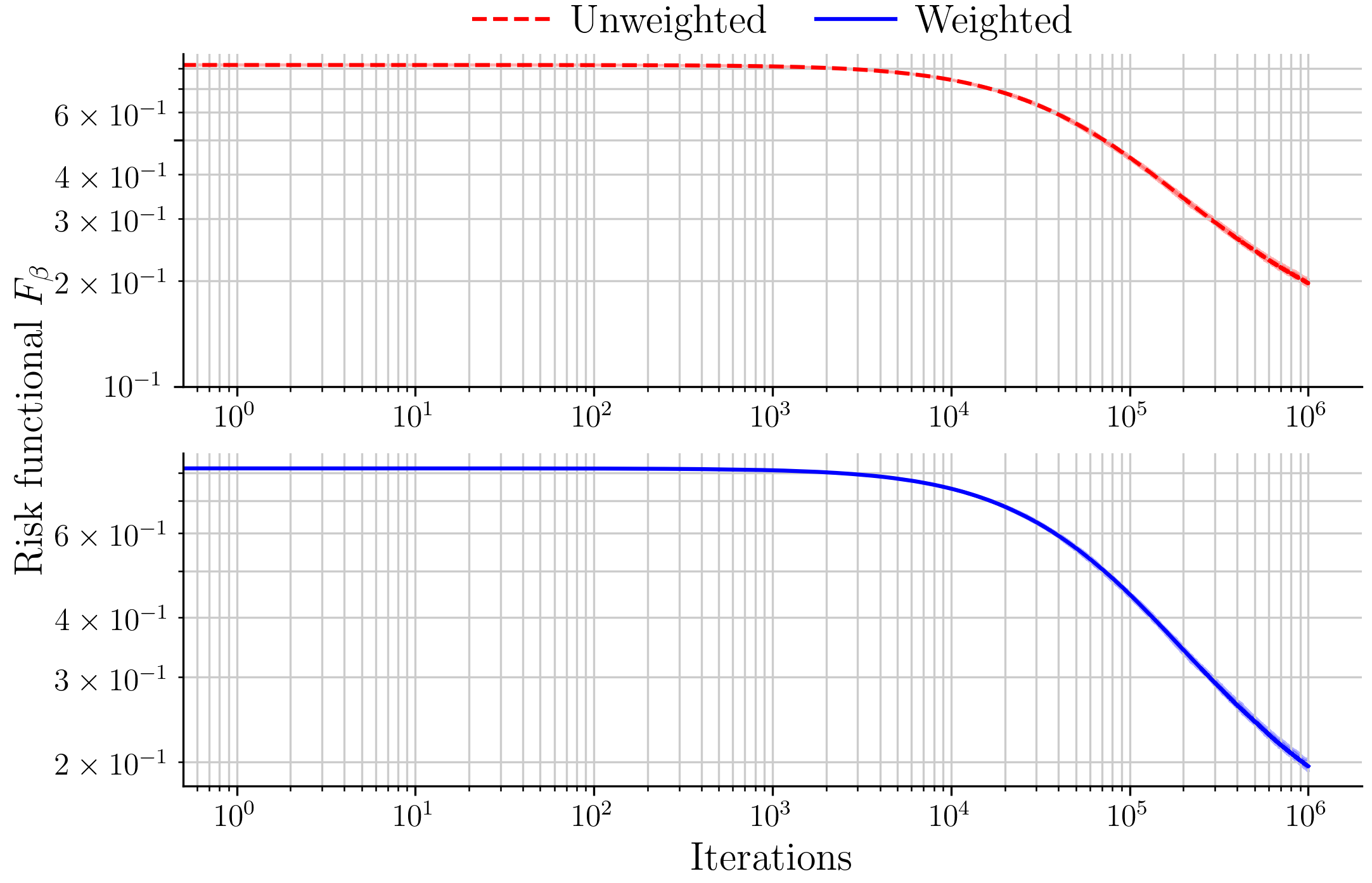}}
\vspace{.3in}
\caption{The solid line shows the average regularized risk functional $F_{\beta}$ versus the number of proximal recursions shown for the Semeion dataset with $\beta=0.5$. The narrow shadow shows the $F_{\beta}$ variation range with the same $\beta$ using the results of 30 independent runs, each starting from the same initial point cloud $\left\{\pmb{\theta}_{0}^{i}, \varrho_{0}^{i}\right\}_{i=1}^{N}$.}
\label{fig:risk_s}
\end{figure}

\subsection{Updated computational complexity of \textproc{ProxLearn}} 
In the binary case, the creation of matrix $\pmb{C}_{k}$ requires $O(n_{x} N^{2})$ flops. In the case of multi-class classification concerning $m$ classes, $\pmb{C}_{k}$ is redefined as the sum of $m$ such matrices. Therefore, creating the updated matrix $\pmb{C}_{k}$ takes $O(m n_{x} N^{2})$ flops. Thus, updating $\pmb{\varrho}_{k-1}$ is of complexity $O((m n_{x} + L) N^{2})$. The complexity of \textproc{EulerMaruyama} can be generalized from the discussion in Sec. \ref{sec:NumericalExperimentsBinary}.

\section{Conclusions}\label{sec:Conclusion}
\subsection{Summary}
This work presents a proximal mean field learning algorithm to train a shallow neural network in the over-parameterized regime. The proposed algorithm is meshless, non-parametric and implements the Wasserstein proximal recursions realizing the gradient descent of entropic-regularized risk. Numerical case studies in binary and multi-class classification demonstrate that the ideas of mean field learning can be attractive as computational framework, beyond purely theoretical interests. Our contribution should be of interest to other learning and variational inference tasks such as the policy optimization and adversarial learning. 

We clarify that the proposed algorithm is specifically designed for a neural network with single hidden layer in large width regime. For multi-hidden layer neural networks, the mean field limit in the sense of width as pursued here, is relatively less explored for the training dynamics. In the multiple hidden layer setting, theoretical understanding of the limits is a frontier of current research; see e.g., \citet{fang2021modeling,sirignano2022mean}. Extending these ideas to design variants of proximal algorithms requires new lines of thought, and as such, is out-of-scope of this paper. In the following, we outline the scope for such future work.

\subsection{Scope for future work}
Existing efforts to generalize the theoretical results for the mean field limit as in this work, from single to multi-hidden layer networks, have been pursued in two different limiting sense. One line of investigations \citep{sirignano2020mean,sirignano2022mean,yu2023normalization} take the infinite width limit one hidden layer at a time while holding the (variable) widths of other hidden layers fixed. More precisely, if the $i$th hidden layer has $N_i$ neurons, then the limit is taken by first normalizing that layer's output by $N_i^{\gamma_i}$ for some fixed $\gamma_i\in[1/2,1]$ and then letting $N_i\rightarrow\infty$ for an index $i$ while holding other $N_{j}$'s fixed and finite, $j\neq i$. 

A different line of investigations \citep{araujo2019mean,nguyen2019mean} consider the limit where the widths of all hidden layers simultaneously go to infinity. In this setting, the population distribution over the joint (across hidden layers) parameter space is shown to evolve under SGD as per a McKean-Vlasov type nonlinear IVP; see \citep[Def. 4.4 and Sec. 5]{araujo2019mean}. We anticipate that the proximal recursions proposed herein can be extended to this setting by effectively lifting the Wasserstein gradient flow to the space of measure-valued paths. Though not quite the same, but this is similar in spirit to how classical bi-mariginal Schr\"{o}dinger bridge problems \citep{leonard2012schrodinger,chen2021stochastic} have been generalized to multi-marginal settings over the path space and have led to significant algorithmic advances in recent years \citep{haasler2021multimarginal,carlier2022linear,chen2023deep}. Pursuing such ideas for designing proximal algorithms in the multiple hidden layer case will comprise our future work.

\section*{Acknowledgment} 
This work was supported by NSF award 2112755. The authors acknowledge the reviewers' perceptive feedback to help improve this paper.

\bibliographystyle{tmlr}
\bibliography{main}

\appendix

\section{Proof of Theorem 1} \label{sec:Thm_Proof}
We provide the formal statement followed by the proof.\\
\textbf{Theorem 1.} \emph{Consider the regularized risk functional (\ref{risk_functional_reg}) wherein $F$ is given by (\ref{f(rho)})-(\ref{VandU}). Let $\rho(t,\pmb{\theta})$ solve the IVP (\ref{NN_PDE_reg}), and let $\{\varrho_{k-1}\}_{k\in\mathbb{N}}$ be the sequence generated by (\ref{semiimplicit}) with $\varrho_0\equiv\rho_0$. Define the interpolation $\varrho_{h}:[0,\infty)\times\mathbb{R}^{p}\mapsto [0,\infty)$ as
$$\varrho_{h}(t,\pmb{\theta}) := \varrho_{k-1}(h,\pmb{\theta})\:\forall t\in [(k-1)h,kh), \: k\in\mathbb{N}.$$ 
Then $\varrho_{h}(t,\pmb{\theta})\xrightarrow[]{h\downarrow 0}\rho(t,\pmb{\theta})$ in $L^{1}(\mathbb{R}^{p})$.}
\begin{proof}
Our proof follows the general development in \citet[Sec. 12.3]{laborde201712}. In the following, we sketch the main ideas.

We have the semi-implicit free energy
$$\hat{F}_{\beta}(\varrho,\varrho_{k-1}) =  \underbrace{\mathbb{E}_{\varrho}\left[F_0 + V(\pmb{\theta}) + \int_{\mathbb{R}^{p}}U(\pmb{\theta},\tilde{\pmb{\theta}})\varrho_{k-1}(\tilde{\pmb{\theta}})\differential\tilde{\pmb{\theta}}\right]}_{=:\mathcal{V}_{\text{advec}}(\varrho)} + \beta^{-1}\mathbb{E}_{\varrho}\left[\log\varrho\right], \quad k\in\mathbb{N},$$
wherein the summand $$\mathcal{V}_{\text{advec}}(\varrho):=\mathbb{E}_{\varrho}\left[F_0 + V(\pmb{\theta}) + \int_{\mathbb{R}^{p}}U(\pmb{\theta},\tilde{\pmb{\theta}})\varrho_{k-1}(\tilde{\pmb{\theta}})\differential\tilde{\pmb{\theta}}\right]$$ is linear in $\varrho$, and contributes as an effective advection potential energy. The remaining summand $\beta^{-1}\mathbb{E}_{\varrho}\left[\log\varrho\right]$ results in from diffusion regularization and contributes as an internal energy term.

We note from \eqref{VandU} that the functional $\mathcal{V}_{\text{advec}}(\varrho)$ is lower bounded for all $\varrho\in\mathcal{P}_2\left(\mathbb{R}^{p}\right)$. Furthermore, $\mathcal{V}_{\text{advec}}(\varrho)$ and $\nabla\mathcal{V}_{\text{advec}}(\varrho)$ are uniformly Lipschitz continuous, i.e., there exists $C_1>0$ such that
$$\|\nabla\mathcal{V}_{\text{advec}}(\varrho)\|_{L^{\infty}(\mathbb{R}^{p})} + \|\nabla^{2}\mathcal{V}_{\text{advec}}(\varrho)\|_{L^{\infty}(\mathbb{R}^{p})}\leq C_1$$
for all $\varrho\in\mathcal{P}_2\left(\mathbb{R}^{p}\right)$ where the constant $C_1>0$ is independent of $\varrho$, and $\nabla^{2}$ denotes the Euclidean Hessian operator. 

Moreover, there exists $C_2>0$ such that for all $\varrho,\tilde{\varrho}\in\mathcal{P}_{2}(\mathbb{R}^{p})$, we have
$$ \|\nabla\mathcal{V}_{\text{advec}}(\varrho) - \nabla\mathcal{V}_{\text{advec}}(\tilde{\varrho})\|_{L^{\infty}(\mathbb{R}^{p})} \leq C_2 W_2\left(\varrho,\tilde{\varrho}\right).$$
Thus, $\mathcal{V}_{\text{advec}}(\varrho)$ satisfy the hypotheses in \citet[Sec. 12.2]{laborde201712}.

For $t\in[0,T]$, we say that $\rho(t,\pmb{\theta})\in C\left([0,T],\mathcal{P}_{2}\left(\mathbb{R}^{p}\right)\right)$ is a weak solution of the IVP \eqref{NN_PDE_reg} if for any smooth compactly supported test function $\varphi\in C_{c}^{\infty}\left([0,\infty)\times\mathbb{R}^{p}\right)$, we have
\begin{align}
\!\!\int_{0}^{\infty}\!\!\int_{\mathbb{R}^{p}}\!\!\left(\!\left(\frac{\partial\varphi}{\partial t} - \langle\nabla\varphi,\nabla\mathcal{V}_{\text{advec}}(\varrho)\rangle\right)\rho + \beta^{-1}\rho\Delta\varphi\!\right)\differential\pmb{\theta}\differential t = - \int_{\mathbb{R}^{p}}\varphi(t=0,\pmb{\theta})\rho_{0}(\pmb{\theta}).
\label{WeakSoln}    
\end{align}
Following \citet[Sec. 12.2]{laborde201712}, under the stated hypotheses on $\mathcal{V}_{\text{advec}}(\varrho)$, there exists weak solution of the IVP \eqref{NN_PDE_reg} that is continuous w.r.t. the $W_2$ metric.

The remaining of the proof follows the outline below. 
\begin{itemize}
\item Using the Dunford-Pettis' theorem, establish that the sequence of functions $\{\varrho_{k}(h,\pmb{\theta})\}_{k\in\mathbb{N}}$ solving \eqref{semiimplicit} is unique.

\item Define the interpolation $\varrho_{h}(t) := \varrho_{k}(h,\pmb{\theta})$ if $t\in ((k-1)h,kh]$ for all $k\in\mathbb{N}$. Then establish that $\varrho_{h}(t)$ solves a discrete approximation of \eqref{WeakSoln}.

\item Finally combine the gradient estimates and pass to the limit $h\downarrow 0$, to conclude that $\rho_{h}(t)$ in this limit solves converges to the weak solution of \eqref{WeakSoln} in strong $L^{1}(\mathbb{R}^{p})$ sense.
\end{itemize}
For the detailed calculations on the passage to the limit, we refer the readers to \citet[Sec. 12.5]{laborde201712}.
\end{proof}



\section{Expressions involving the derivatives of $\pmb{v}$} \label{sec: vderivatives}
We define matrices
\begin{align*}
\pmb{T} &:=\tanh\left(\pmb{WX}^{\top} + \pmb{b}\pmb{1}^{\top}\right),\\
\pmb{S} &:= \sech^{2}\left(\pmb{WX}^{\top} + \pmb{b}\pmb{1}^{\top}\right),
\end{align*}
where $\pmb{1}$ is a vector of all ones of size $n_{\rm{data}}\times 1$, and the functions $\tanh(\cdot)$ and $\sech^{2}(\cdot)$ are elementwise. Notice that $\pmb{T},\pmb{S}\in\mathbb{R}^{N\times n_{\rm{data}}}$. Then 
\begin{align*}
\pmb{v} =  - \frac{2}{n_{\rm{data}}} \pmb{a} \odot \left(\tanh (\pmb{W} \pmb{X}^{\top} + \pmb{b} \pmb{1}^{\top})\pmb{y}\right) = - \frac{2}{n_{\rm{data}}} \pmb{a} \odot \left( \pmb{Ty} \right).
\end{align*}

\begin{proposition}\label{prop:derivativesofv}
With the above notations in place, we have
\begin{align}
\frac{\partial \pmb{v}}{\partial \pmb{a}} \pmb{1} = \sum_{k=1}^{N} \frac{\partial \pmb{v}_{k}}{\partial \pmb{a}} = - \frac{2}{n_{\rm{data}}} \pmb{Ty},
\label{delvdela}
\end{align}
and
\begin{align}
\frac{\partial \pmb{v}}{\partial \pmb{b}} \pmb{1} = \sum_{k=1}^{N} \frac{\partial \pmb{v}_{k}}{\partial \pmb{b}} = - \frac{2}{n_{\rm{data}}} \pmb{a} \odot \pmb{Sy}.
\label{delvdelb}
\end{align}
Furthermore,
\begin{align}
\sum_{k=1}^{N} \frac{\partial \pmb{v}_{k}}{\partial \pmb{W}} = -\frac{2}{n_{\rm{data}}} \left[ \left( \pmb{a} \pmb{1}^{\top} \right) \odot \left( \pmb{S} \left( \pmb{X} \odot \pmb{y}\pmb{1}^{\top} \right) \right) \right].
\label{delvdelW}
\end{align}
\end{proposition}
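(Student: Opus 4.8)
The statement is a routine calculus exercise: we must differentiate the vector-valued map $\pmb{v} = -\frac{2}{n_{\rm{data}}}\,\pmb{a}\odot(\pmb{Ty})$ componentwise with respect to the parameter blocks $\pmb{a}$, $\pmb{b}$, and $\pmb{W}$, and then sum over the $N$ output components. The plan is to work entrywise and then repackage the sums into the compact matrix expressions claimed. First I would fix the $k$th component: $\pmb{v}_k = -\frac{2}{n_{\rm{data}}}\,a_k \sum_{j=1}^{n_{\rm{data}}} \tanh\big(\langle \pmb{w}_k, \pmb{x}_j\rangle + b_k\big)\, y_j = -\frac{2}{n_{\rm{data}}}\, a_k (\pmb{Ty})_k$, where $\pmb{T}(k,j) = \tanh(\langle\pmb{w}_k,\pmb{x}_j\rangle + b_k)$ and $\pmb{S}(k,j) = \sech^2(\langle\pmb{w}_k,\pmb{x}_j\rangle+b_k)$, using $\frac{d}{dt}\tanh t = \sech^2 t$.

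For \eqref{delvdela}: since $\pmb{v}_k$ depends on $a_m$ only when $m=k$, we have $\partial \pmb{v}_k/\partial a_m = -\frac{2}{n_{\rm{data}}} (\pmb{Ty})_k \,\delta_{km}$, so $\sum_{k=1}^N \partial\pmb{v}_k/\partial\pmb{a}$ is the vector with $m$th entry $-\frac{2}{n_{\rm{data}}}(\pmb{Ty})_m$, i.e.\ $-\frac{2}{n_{\rm{data}}}\pmb{Ty}$; the identity $\frac{\partial\pmb{v}}{\partial\pmb{a}}\pmb{1} = \sum_k \partial\pmb{v}_k/\partial\pmb{a}$ is just the observation that right-multiplication by $\pmb{1}$ sums columns of the Jacobian. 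For \eqref{delvdelb}: again only the $k$th bias enters $\pmb{v}_k$, and $\partial\pmb{v}_k/\partial b_k = -\frac{2}{n_{\rm{data}}} a_k \sum_j \sech^2(\langle\pmb{w}_k,\pmb{x}_j\rangle + b_k) y_j = -\frac{2}{n_{\rm{data}}} a_k (\pmb{Sy})_k$, which assembles into $-\frac{2}{n_{\rm{data}}}\,\pmb{a}\odot\pmb{Sy}$. For \eqref{delvdelW}: here $\partial\pmb{v}_k/\partial\pmb{w}_m$ vanishes unless $m=k$, and $\partial\pmb{v}_k/\partial\pmb{w}_k = -\frac{2}{n_{\rm{data}}} a_k \sum_j \sech^2(\langle\pmb{w}_k,\pmb{x}_j\rangle+b_k)\, y_j\, \pmb{x}_j^\top = -\frac{2}{n_{\rm{data}}} a_k \big(\pmb{S}(k,:)\odot \pmb{y}^\top\big)\pmb{X}$; collecting these rows for $k=1,\dots,N$ gives the $N\times n_x$ matrix $-\frac{2}{n_{\rm{data}}}\big[(\pmb{a}\pmb{1}^\top)\odot\big(\pmb{S}(\pmb{X}\odot\pmb{y}\pmb{1}^\top)\big)\big]$, matching \eqref{delvdelW} once one checks that $\big(\pmb{S}(\pmb{X}\odot\pmb{y}\pmb{1}^\top)\big)(k,\ell) = \sum_j \pmb{S}(k,j)\, x_{j\ell}\, y_j$, which is exactly the $\ell$th coordinate of the row vector above.

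The only mild subtlety — and the step I would be most careful with — is bookkeeping the matrix shapes and the $\odot$ placements in \eqref{delvdelW}, since $\pmb{X}$ is $n_{\rm{data}}\times n_x$, $\pmb{y}\pmb{1}^\top$ must be $n_{\rm{data}}\times n_x$ (broadcasting $\pmb{y}\in\mathbb{R}^{n_{\rm{data}}}$ across the $n_x$ feature columns), and $\pmb{a}\pmb{1}^\top$ must be $N\times n_{\rm{data}}$ so that the Hadamard product with $\pmb{S}$ is well-defined before the final multiplication by $\pmb{X}\odot(\pmb{y}\pmb{1}^\top)$ on the right. No analytic obstacle arises: the map is smooth (compositions of $\tanh$, affine maps, and products), so all derivatives exist classically and the diagonal (in $k$ vs.\ $m$) structure of the parameterization makes every cross-term vanish, reducing the whole proposition to the three entrywise computations above followed by reassembly into matrix notation.
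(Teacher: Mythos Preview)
Your proposal is correct and follows essentially the same route as the paper's proof: componentwise differentiation of $\pmb{v}_k$, observing that $\pmb{v}_k$ depends only on $(a_k,b_k,\pmb{w}_k)$ so the Jacobians are block-diagonal, then reassembling the diagonal entries into the claimed matrix forms. One small bookkeeping slip in your last paragraph: in the paper's parenthesization $(\pmb{a}\pmb{1}^{\top})\odot\big(\pmb{S}(\pmb{X}\odot\pmb{y}\pmb{1}^{\top})\big)$ the Hadamard product is taken \emph{after} the matrix product $\pmb{S}(\pmb{X}\odot\pmb{y}\pmb{1}^{\top})\in\mathbb{R}^{N\times n_x}$, so $\pmb{a}\pmb{1}^{\top}$ is $N\times n_x$ (the paper notes $\pmb{1}\in\mathbb{R}^{n_x}$), not $N\times n_{\rm{data}}$; since row-scaling by $\pmb{a}$ commutes with right matrix multiplication, your alternative reading yields the same matrix and the argument is unaffected.
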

\begin{proof}
The $k^{\rm{th}}$ element of $\pmb{v}$ is
$\pmb{v}_{k} = - \frac{2}{n_{\rm{data}}} \pmb{a}_{k} \sum_{i=1}^{n_{\rm{data}}} \left[ \pmb{T}_{k, i} \pmb{y}_{i} \right]$. Thus,
\begin{align*}
\frac{\partial \pmb{v}_{k}}{\partial \pmb{a}_{j}} = 
\begin{cases}
    0 & \text{for}\quad k \neq j, \\
    - \frac{2}{n_{\rm{data}}} \sum_{i=1}^{n_{\rm{data}}} \left[ \pmb{T}_{k, i} \pmb{y}_{i} \right] & \text{for}\quad k = j.
\end{cases}
\end{align*}
So the matrix $\frac{\partial \pmb{v}}{\partial \pmb{a}}$ is diagonal, and
\begin{align*}
    \left[ \frac{\partial \pmb{v}}{\partial \pmb{a}} \pmb{1} \right]_{k} = - \frac{2}{n_{\rm{data}}} \sum_{i=1}^{n_{\rm{data}}} \left[ \pmb{T}_{k, i} \pmb{y}_{i} \right] = - \frac{2}{n_{\rm{data}}} \left[ \pmb{Ty} \right]_{k}.
\end{align*} 
Hence, we obtain
\begin{align*}
\frac{\partial \pmb{v}}{\partial \pmb{a}} \pmb{1} = - \frac{2}{n_{\rm{data}}} \pmb{Ty},
\end{align*}
which is (\ref{delvdela}).

On the other hand,
\begin{align*}
\frac{\partial \pmb{v}_{k}}{\partial \pmb{b}_{k}} = \frac{\partial}{\partial \pmb{b}_{k}} \left[ - \frac{2}{n_{\rm{data}}} \pmb{a}_{k} \sum_{i=1}^{n_{\rm{data}}} \left[ \pmb{T}_{k, i} \pmb{y}_{i} \right] \right] \\
= - \frac{2}{n_{\rm{data}}} \pmb{a}_{k} \sum_{i=1}^{n_{\rm{data}}} \frac{\partial}{\partial \pmb{b}_{k}} \left[ \pmb{T}_{k, i} \pmb{y}_{i} \right]. 
\end{align*}
Note that
\begin{align*}
\frac{\partial}{\partial \pmb{b}_{k}} \left[ \pmb{T}_{k, i} \pmb{y}_{i} \right] = \frac{\partial}{\partial \pmb{b}_{k}} \tanh\left(\sum_{j=1}^{n_x} \left(\pmb{W}_{k,j}\pmb{X}_{i,j}\right) + \pmb{b}_{k}\right)\pmb{y}_{i}  \\
= \sech^{2}\left(\sum_{j=1}^{n_x} (\pmb{W}_{k,j}\pmb{X}_{i,j}) + \pmb{b}_{k}\right)\pmb{y}_{i} = \pmb{S}_{k,i} \pmb{y}_{i}.
\end{align*}
Therefore,
\begin{align*}
\frac{\partial \pmb{v}_{k}}{\partial \pmb{b}_{j}} = 
\begin{cases}
0 & \text{for}\quad k \neq j, \\
- \frac{2}{n_{\rm{data}}} \pmb{a}_{k} \sum_{i=1}^{n_{\rm{data}}} \left[ \pmb{S}_{k,i} \pmb{y}_{i} \right] & \text{for}\quad k = j.
\end{cases}
\end{align*}
As the matrix $\frac{\partial \pmb{v}}{\partial \pmb{b}}$ is diagonal, we get
\begin{align*}
\left[ \frac{\partial \pmb{v}}{\partial \pmb{b}} \pmb{1} \right]_{k} = - \frac{2}{n_{\rm{data}}} \pmb{a}_{k} \sum_{i=1}^{n_{\rm{data}}} \left[ \pmb{S}_{k, i} \pmb{y}_{i} \right] = - \frac{2}{n_{\rm{data}}} \pmb{a}_{k} \left[ \pmb{S} \pmb{y} \right]_{k},
\end{align*}
and so
\begin{align*}
\frac{\partial \pmb{v}}{\partial \pmb{b}} \pmb{1} = - \frac{2}{n_{\rm{data}}} \pmb{a} \odot \pmb{Sy},
\end{align*}
which is indeed (\ref{delvdelb}).

Likewise, we take an element-wise approach to the derivatives with respect to weights $\pmb{W}_{k,m}$.  Note that such a weight $\pmb{W}_{k,m}$ will only appear in the $k$th element of $\pmb{v}$, and so we only need to compute
\begin{align*}
\frac{\partial \pmb{v}_{k}}{\partial \pmb{W}_{k, m}} = - \frac{2}{n_{\rm{data}}} \pmb{a}_{k} \sum_{i=1}^{n_{\rm{data}}} \frac{\partial}{\partial \pmb{W}_{k, m}} \left[ \pmb{T}_{k, i} \pmb{y}_{i} \right]. 
\end{align*}
Since
\begin{align*}
\frac{\partial}{\partial \pmb{W}_{k, m}} \left[ \pmb{T}_{k, i} \pmb{y}_{i} \right] = \frac{\partial}{\partial \pmb{W}_{k,m}} \tanh\left(\sum_{j=1}^{n_x} (\pmb{W}_{k,j}\pmb{X}_{j,i}) + \pmb{b}_{k}\right)\pmb{y}_{i} \\
= \sech^{2}\left(\sum_{j=1}^{n_x} (\pmb{W}_{k,j}\pmb{X}_{i,j}) + \pmb{b}_{k}\right)\pmb{X}_{i,m} \pmb{y}_{i} = \pmb{S}_{k,i} \pmb{X}_{i,m} \pmb{y}_{i},
\end{align*}
we have
\begin{align*}
\frac{\partial \pmb{v}_{k}}{\partial \pmb{W}_{m, j}} = 
\begin{cases}
0 & \text{for}\quad k \neq m, \\
- \frac{2}{n_{\rm{data}}} \pmb{a}_{k} \sum_{i=1}^{n_{\rm{data}}} \left[ \pmb{S}_{k,i} \pmb{X}_{i,m} \pmb{y}_{i} \right] & \text{for}\quad k = m.
\end{cases}
\end{align*}
Thus,
\begin{align*}
\sum_{k=1}^{N} \frac{\partial \pmb{v}_{k}}{\partial \pmb{W}_{m, j}} = - \frac{2}{n_{\rm{data}}} \pmb{a}_{m} \sum_{i=1}^{n_{\rm{data}}} \left[ \pmb{S}_{m,i} \pmb{X}_{i,m} \pmb{y}_{i} \right] \\ = - \frac{2}{n_{\rm{data}}} \pmb{a}_{m} \left[ \pmb{S} \left( \pmb{X} \odot \left( \pmb{y} \pmb{1}^{\top} \right) \right) \right]_{m, j}.
\end{align*}
Therefore, considering $\pmb{1}\in\mathbb{R}^{n_x}$, we write
\begin{align*}
\sum_{k=1}^{N} \frac{\partial \pmb{v}_{k}}{\partial \pmb{W}} = -\frac{2}{n_{\rm{data}}} \left[ \left( \pmb{a} \pmb{1}^{\top} \right) \odot \left( \pmb{S} \left( \pmb{X} \odot \pmb{y}\pmb{1}^{\top} \right) \right) \right],
\end{align*}
thus arriving at (\ref{delvdelW}). This completes the proof.
\end{proof}

\section{Expressions involving the derivatives of $\pmb{u}$} \label{sec:uderivatives}
Expressions involving the derivatives of $\pmb{u}$, are summarized in the Proposition next. These results find use in Sec. \ref{sec:NumericalExperimentsBinary}. We start by noting that
\begin{align*}
\pmb{u} = \frac{1}{n_{\rm{data}}}(\pmb{1}^{\top}\pmb{a} \odot \pmb{T})(\pmb{1}^{\top}\pmb{a} \odot \pmb{T})^{\top} \pmb{\rho} \\ = \frac{1}{n_{\rm{data}}}(\pmb{1}^{\top}\pmb{a} \odot \pmb{T})(\pmb{a}^{\top}\pmb{1} \odot \pmb{T}^{\top}) \pmb{\rho}.
\end{align*}
\begin{proposition}\label{prop:derivativesofu}
With the above notations in place, we have
\begin{align}
&\underbrace{\frac{\partial\pmb{u}}{\partial \pmb{a}}}_{N\times N}\:\underbrace{\pmb{1}\vphantom{\frac{\partial\pmb{u}}{\partial \pmb{a}}}}_{N\times 1} = \frac{1}{n_{\rm{data}}}\left[\left(\left(\pmb{\varrho}\pmb{a}^{\top}\right) \odot \left(\pmb{T}\pmb{T}^{\top}\right)\right) \pmb{1} + \pmb{1}\left(\pmb{a}\odot\pmb{\varrho}\right)^{\top}\pmb{TT}^{\top}\pmb{1}\right],
\label{partialupartiala}    
\end{align}
and
\begin{align}
\underbrace{\frac{\partial\pmb{u}}{\partial \pmb{b}}}_{N\times N}\:\underbrace{\pmb{1}\vphantom{\frac{\partial\pmb{u}}{\partial \pmb{a}}}}_{N\times 1} = \frac{1}{n_{\rm{data}}}\left[\left(\left(\pmb{a 1}^{\top}\right) \odot \left(\pmb{ST}^{\top}\right) \odot \left(\pmb{1} \left(\pmb{a}\odot\pmb{\varrho}\right)^{\top}\right)\right)\pmb{1} \right. \nonumber \\ + \left. \left(\left(\pmb{1a}^{\top}\right) \odot \left(\pmb{ST}^{\top}\right) \odot \left( \left(\pmb{a}\odot\pmb{\varrho}\right)\pmb{1}^{\top}\right)\right) \pmb{1}\right].
\label{partialupartialb}    
\end{align}
Furthermore,
\begin{align}
\sum_{k=1}^{N} \frac{\partial \pmb{u}}{\partial \pmb{W}_{i,j}} = \frac{1}{n_{\rm{data}}} \sum_{k=1}^{N} \sum_{m=1}^{n_{\rm{data}}} a_{i}a_{k}(\varrho_{i} + \varrho_{k}) T_{k,m} S_{i,m} X_{m,j}.
\label{partialupartialW}  
\end{align}
\end{proposition}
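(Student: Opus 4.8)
I would carry out the three identities entrywise, repackaging into Hadamard/matrix form only at the end. Write $\pmb{C} := \pmb{T}\pmb{T}^{\top}\in\mathbb{R}^{N\times N}$, so $C_{i,k} = \sum_{m=1}^{n_{\rm{data}}} T_{i,m}T_{k,m}$ and $\pmb{C} = \pmb{C}^{\top}$. From $\pmb{P}_{k-1} = (\pmb{a}\pmb{1}^{\top})\odot\pmb{T}$, $\pmb{U}_{k-1} = \tfrac{1}{n_{\rm{data}}}\pmb{P}_{k-1}\pmb{P}_{k-1}^{\top}$ and $\pmb{u} = \pmb{U}_{k-1}\pmb{\varrho}$, one reads off the scalar form
\[
u_i \;=\; \frac{a_i}{n_{\rm{data}}}\sum_{k=1}^{N} a_k\,\varrho_k\,C_{i,k},\qquad i\in[N].
\]
The only facts used below are that $\pmb{T},\pmb{S}$ do not depend on $\pmb{a}$; that $b_i$ (resp. $W_{i,j}$) affects only the $i$-th row of $\pmb{T}$, with $\partial T_{i,m}/\partial b_i = S_{i,m}$ and $\partial T_{i,m}/\partial W_{i,j} = S_{i,m}X_{m,j}$; and that ``$\tfrac{\partial\pmb{u}}{\partial(\cdot)}\pmb{1}$'' denotes the vector whose $i$-th entry is $\sum_{k=1}^{N}\partial u_k/\partial(\cdot)_i$, i.e. the sensitivity of $\sum_k u_k$ to the $i$-th parameter, which is precisely what the \textproc{Backward} call in line 6 of Algorithm \ref{alg:euler} returns.

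\textbf{Step 1 (derivative in $\pmb{a}$, equation (\ref{partialupartiala})).} Differentiate $u_k = \tfrac{a_k}{n_{\rm{data}}}\sum_\ell a_\ell\varrho_\ell C_{k,\ell}$ in $a_i$. Because $C_{k,\ell}$ is $\pmb{a}$-free, $a_i$ enters only as the explicit prefactor (active when $k=i$) and through the $\ell=i$ summand. For $k=i$ the product rule gives $\tfrac{1}{n_{\rm{data}}}\big(\sum_\ell a_\ell\varrho_\ell C_{i,\ell} + a_i\varrho_i C_{i,i}\big)$; for $k\neq i$ only the $\ell=i$ summand survives, giving $\tfrac{a_k}{n_{\rm{data}}}\varrho_i C_{k,i}$. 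Summing over $k$ and folding $a_i\varrho_i C_{i,i}$ into the $k\neq i$ part (using $C_{k,i}=C_{i,k}$) collapses the expression to $\tfrac{1}{n_{\rm{data}}}\big(\varrho_i(\pmb{C}\pmb{a})_i + (\pmb{C}(\pmb{a}\odot\pmb{\varrho}))_i\big)$, which is the two Hadamard blocks in (\ref{partialupartiala}) (the first equals $((\pmb{\varrho}\pmb{a}^{\top})\odot\pmb{C})\pmb{1}$ by symmetry of $\pmb{C}$).

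\textbf{Steps 2--3 (derivatives in $\pmb{b}$ and $\pmb{W}$, equations (\ref{partialupartialb})--(\ref{partialupartialW})).} These are parallel. Now $\pmb{a}$ is inert, but $C_{k,\ell}$ depends on the parameters of neurons $k$ and $\ell$ through rows $k,\ell$ of $\pmb{T}$, so $\partial C_{k,\ell}/\partial b_i = \delta_{k,i}\sum_m S_{i,m}T_{\ell,m} + \delta_{\ell,i}\sum_m T_{k,m}S_{i,m}$; inserting this into $u_k$, summing over $k$, and resolving the two Kronecker deltas yields $\tfrac{a_i}{n_{\rm{data}}}\big(\sum_m S_{i,m}\sum_\ell a_\ell\varrho_\ell T_{\ell,m} + \varrho_i\sum_m S_{i,m}\sum_k a_k T_{k,m}\big)$, and reading $\sum_m S_{i,m}(\cdot)_m$ as a row of $\pmb{S}\pmb{T}^{\top}$ applied to $\pmb{a}\odot\pmb{\varrho}$ and to $\pmb{a}$ produces the two symmetric $\pmb{S}\pmb{T}^{\top}$ blocks of (\ref{partialupartialb}). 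The $\pmb{W}$ case is identical with $\partial T_{i,m}/\partial W_{i,j} = S_{i,m}X_{m,j}$, so that $\partial C_{k,\ell}/\partial W_{i,j} = \delta_{k,i}\sum_m S_{i,m}X_{m,j}T_{\ell,m} + \delta_{\ell,i}\sum_m T_{k,m}S_{i,m}X_{m,j}$; substituting, summing over $k\in[N]$, resolving the deltas and relabelling the free index merges the two pieces into $\tfrac{1}{n_{\rm{data}}}\sum_{k=1}^{N}\sum_{m=1}^{n_{\rm{data}}} a_i a_k(\varrho_i+\varrho_k)T_{k,m}S_{i,m}X_{m,j}$, i.e. (\ref{partialupartialW}); the symmetrized weight $\varrho_i+\varrho_k$ is exactly the signature of $\pmb{\theta}^i$ entering $C_{k,\ell}$ both as the $k$-th and as the $\ell$-th row of $\pmb{T}$.

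\textbf{Main obstacle.} No step is conceptually deep --- it is chain rule through $\tanh$ --- and the entire difficulty is the bookkeeping of ``diagonal'' contributions: each parameter of neuron $i$ enters the quadratic form $\pmb{P}_{k-1}\pmb{P}_{k-1}^{\top}$ twice (via the $i$-th rows of the left and the right factor), and $a_i$ enters a third time as an explicit prefactor, so the product rule spawns overlapping Kronecker-delta terms whose correct recombination is what produces the symmetrized right-hand sides (the extra $\varrho_i$ in (\ref{partialupartiala}), the transposed $\pmb{S}\pmb{T}^{\top}$ block in (\ref{partialupartialb}), the $\varrho_i+\varrho_k$ in (\ref{partialupartialW})). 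I would therefore keep every computation strictly indexed until the final line, and verify each of (\ref{partialupartiala})--(\ref{partialupartialW}) on the toy case $N=2$, $n_{\rm{data}}=1$ before passing to matrix notation, to guard against silent index slips.
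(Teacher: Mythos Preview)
Your proposal is correct and follows essentially the same route as the paper's proof: both arguments write $u_i$ in scalar form, compute $\partial u_i/\partial(\cdot)_k$ by distinguishing the diagonal ($i=k$) and off-diagonal ($i\neq k$) contributions of the quadratic form, then sum and repackage into Hadamard/matrix notation. Your use of the auxiliary matrix $\pmb{C}=\pmb{T}\pmb{T}^{\top}$ and of Kronecker deltas for the $\pmb{b}$ and $\pmb{W}$ cases is a slightly tidier bookkeeping device than the paper's explicit case splits via the row vectors $\pmb{t}_i$, but the underlying calculation is identical.
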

\begin{proof}
Letting $\pmb{t}_{i}^{\top}$ denote the $i$th row of $\pmb{T}$, we rewrite $\pmb{u}$ as follows:
\begin{align*}
\pmb{u} = \frac{1}{n_{\rm{data}}} \begin{bmatrix} a_{1} \pmb{t}_{1}^{\top} \\ \vdots \\ a_{N} \pmb{t}_{N}^{\top} \end{bmatrix} \begin{bmatrix} \rho_{1}a_{1} \pmb{t}_{1} + \hdots + \rho_{N}a_{N} \pmb{t}_{N} \end{bmatrix} \\ = \frac{1}{n_{\rm{data}}} \begin{bmatrix} a_{1}\pmb{t}_{1}^{\top} (\rho_{1}a_{1} \pmb{t}_{1} + \hdots + \rho_{N}a_{N} \pmb{t}_{N}) \\ \vdots \\  a_{N}\pmb{t}_{N}^{\top} (\rho_{1}a_{1} \pmb{t}_{1} + \hdots + \rho_{N}a_{N} \pmb{t}_{N}) \end{bmatrix}.
\end{align*}
For $i \neq k$, we thus have
\begin{align*}
\frac{\partial \pmb{u}_{i}}{\partial \pmb{a}_{k}} = \frac{1}{n_{\rm{data}}} a_{i}\pmb{t}_{i}^{\top} (\rho_{k} \pmb{t}_{k}) = \frac{1}{n_{\rm{data}}} a_{i}\rho_{k}\pmb{t}_{i}^{\top} \pmb{t}_{k}.
\end{align*}
Likewise, for $i = k$, we have
\begin{align*}
\frac{\partial \pmb{u}_{k}}{\partial \pmb{a}_{k}} = \frac{1}{n_{\rm{data}}} a_{k}\rho_{k}\pmb{t}_{k}^{\top} \pmb{t}_{k} + \frac{1}{n_{\rm{data}}} \pmb{t}_{k}^{\top} (\rho_{1}a_{1} \pmb{t}_{1} + \hdots + \rho_{N}a_{N} \pmb{t}_{N}).
\end{align*}
Combining the above, we obtain $\frac{\partial\pmb{u}}{\partial\pmb{a}}$, and hence (\ref{partialupartiala}) follows.

On the other hand, for $i \neq k$, 
we have
\begin{align*}
\frac{\partial \pmb{u}_{i}}{\partial \pmb{b}_{k}} =  \frac{1}{n_{\rm{data}}} a_{i}\pmb{t}_{i}^{\top} \rho_{k}a_{k} \frac{ \partial \pmb{t}_{k}}{\partial \pmb{b}_{k}} = \frac{1}{n_{\rm{data}}} a_{i}\pmb{t}_{i}^{\top} \rho_{k}a_{k} \pmb{s}_{k},
\end{align*}
and for $i = k$, we obtain
\begin{align*}
 \frac{\partial \pmb{u}_{i}}{\partial \pmb{b}_{k}} =  \frac{1}{n_{\rm{data}}} a_{i}\left(\frac{ \partial \pmb{t}_{k}}{\partial \pmb{b}_{k}}\right)^{\top} (\rho_{1}a_{1} \pmb{t}_{1} + \hdots + \rho_{N}a_{N} \pmb{t}_{N}) \\ + \frac{1}{n_{\rm{data}}} a_{k}\pmb{t}_{k}^{\top} \rho_{k}a_{k} \frac{ \partial \pmb{t}_{k}}{\partial \pmb{b}_{k}} \\
 = \frac{1}{n_{\rm{data}}} a_{i}(\pmb{s}_{k})^{\top} (\rho_{1}a_{1} \pmb{t}_{1} + \hdots + \rho_{N}a_{N} \pmb{t}_{N}) \\ +  \frac{1}{n_{\rm{data}}} a_{k}\pmb{t}_{k}^{\top} \rho_{k}a_{k} \pmb{s}_{k}.
\end{align*}
Combining the above, we obtain $\frac{\partial\pmb{u}}{\partial\pmb{b}}$, and hence (\ref{partialupartialb}) follows.

Finally, noting that $\pmb{W}_{i,j}$ is in the $i$th row of $\pmb{T}$, 
for $i \neq k$, we obtain
\begin{align*}
\frac{\partial \pmb{u}_{k}}{\partial \pmb{W}_{i,j}} = \frac{1}{n_{\rm{data}}} a_{k}\pmb{t}_{k}^{\top} \rho_{i}a_{i} \frac{\partial \pmb{t}_{i}}{\partial \pmb{W}_{i,j}} = \frac{1}{n_{\rm{data}}} a_{k}\pmb{t}_{k}^{\top} \rho_{i}a_{i} (\pmb{s}_{i} \odot \pmb{x}_{j}),
\end{align*}
where $\pmb{x}_{j}$ is the $j$th column of $\pmb{X}$.
Likewise, for $i = k$, we get
\begin{align*}
\frac{\partial \pmb{u}_{k}}{\partial \pmb{W}_{i,j}} = \frac{1}{n_{\rm{data}}} a_{k}\left(\frac{\partial \pmb{t}_{k}}{\partial \pmb{W}_{i,j}}\right)^{\top} (\rho_{1}a_{1} \pmb{t}_{1} + \hdots + \rho_{N}a_{N} \pmb{t}_{N}) \\ + \frac{1}{n_{\rm{data}}} a_{k}\pmb{t}_{k}^{\top} \rho_{k}a_{k} \frac{\partial \pmb{t}_{k}}{\partial \pmb{W}_{i,j}} \\
= \frac{1}{n_{\rm{data}}} a_{k}(\pmb{s}_{i} \odot \pmb{x}_{j})^{\top} (\rho_{1}a_{1} \pmb{t}_{1} + \hdots + \rho_{N}a_{N} \pmb{t}_{N}) \\ + \frac{1}{n_{\rm{data}}} a_{k}\pmb{t}_{k}^{\top} \rho_{k}a_{k} (\pmb{s}_{i} \odot \pmb{x}_{j}).
\end{align*}
Combining the above, we obtain $\frac{\partial\pmb{u}}{\partial\pmb{W}_{i,j}}$, thereby arriving at (\ref{partialupartialW}).
\end{proof}

\section{Learning sinusoid} \label{sec:NumericalSine}
To better visualize the functionality of the proposed algorithm, we perform a synthetic case study of learning a sinusoid following the set up as in \citep[Sec. 2.1]{novak2019neural}. We performed 5000 iterations of \textproc{ProxLearn} with $N=1000$ samples (no mini-batch) from the initial PDF $\varrho_{0}(\bm{\theta}\equiv(a,b,w))=\text{Unif}\left(\left[-1,1\right]\times\left[-1,1\right]\times\left[-1.5,1.5\right]\right)$, and used algorithm parameters $\beta=0.3$, $h=10^{-4}$, $\delta=10^{-3}$, $\varepsilon=10^{-3}$, $L=10$. The evolution of the associated regularized risk functional and the learnt functions are shown in Fig. \ref{fig:RiskSine}. Fig. \ref{fig:ApproximantAfter3200Iterates} shows the function approximations learnt by our proposed
algorithm at the end of 3200 iterations for 20 randomized runs with
the same initial PDF and parameters as reported here.
\begin{figure}[t]
\centerline{\includegraphics[width = 0.99\linewidth]{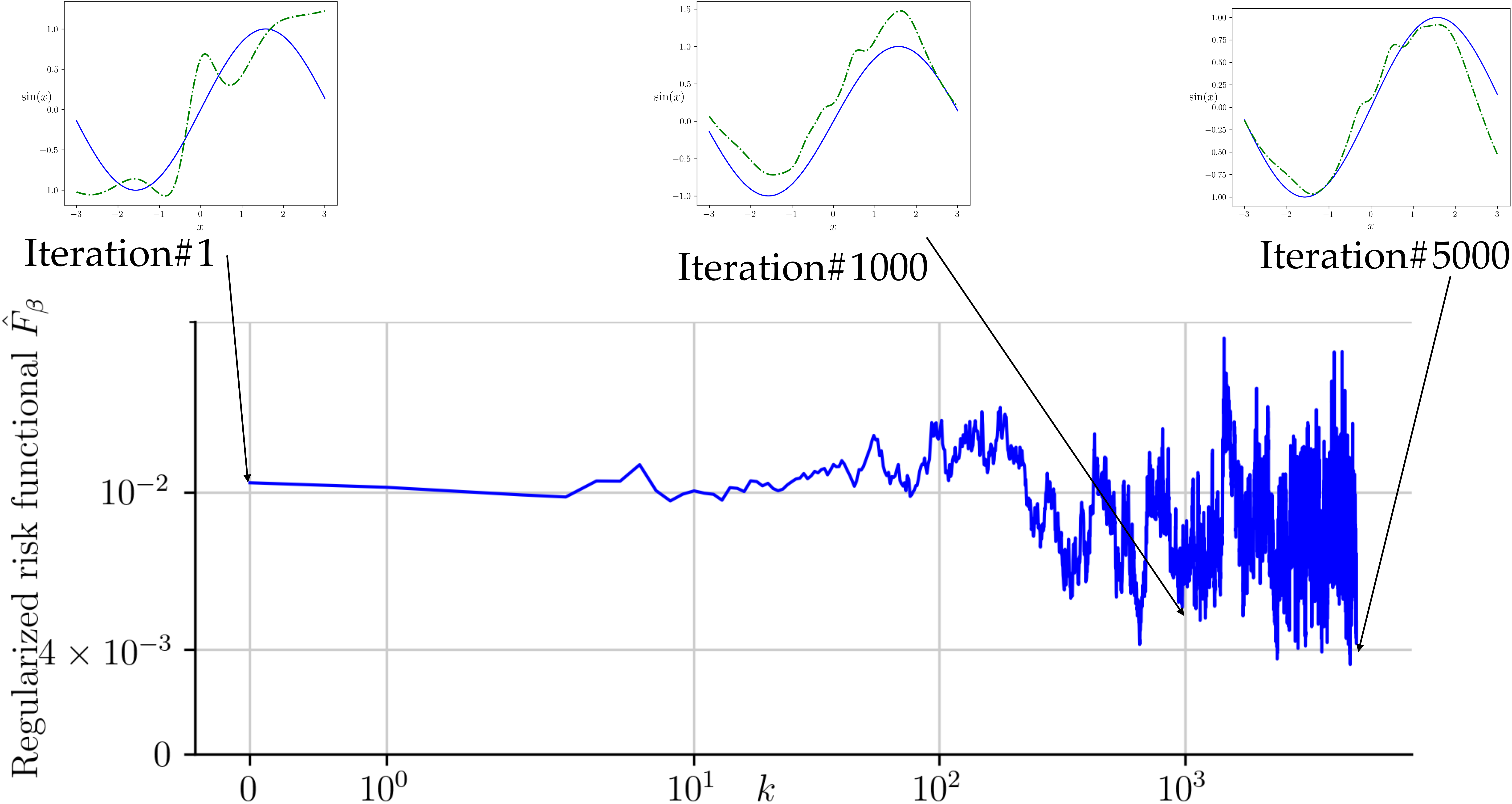}}
\vspace{.1in}
\caption{The evolution of the regularized risk $\hat{F}_{\beta}$ versus iteration index $k$ for the proposed \textproc{ProxLearn}. Inset plots compare the ground truth (sinusoid) with the output from the network at three specific iterations.}
\label{fig:RiskSine}
\end{figure}

\begin{figure}[t]
\centerline{\includegraphics[width = 0.7\linewidth]{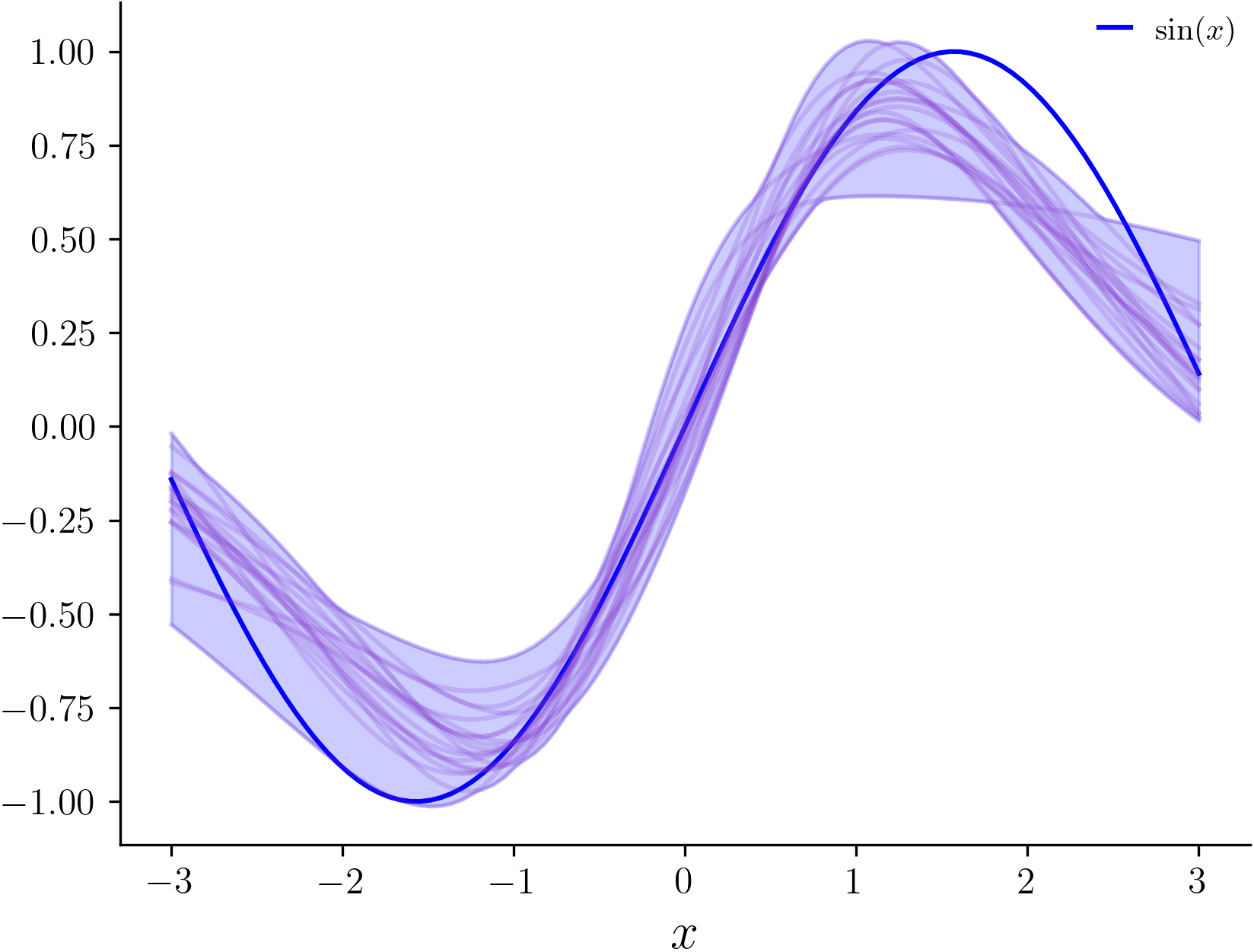}}
\vspace{.1in}
\caption{Comparison of the ground truth (here $\sin(x)$) with the learnt approximants obtained from the proposed \textproc{ProxLearn} after 3200 iterations for 20 randomized runs. All randomized runs use the same initial PDF and parameters as reported here.}
\label{fig:ApproximantAfter3200Iterates}
\end{figure}

To illustrate the effect of finite $N$ on the algorithm's performance, we report the effect of varying $N$ on the final regularized risk value $\hat{F}_{\beta}$ for a specific synthetic experiment. We observe that increasing $N$ improves the final regularized risk, as expected intuitively.
\begin{table}[t]
\begin{center}
\caption{Comparing final $\hat{F_{\beta}}$ for varying $N$} 
\begin{tabular}{| c | c |}
\hline
$N$  &Final $\hat{F_{\beta}}$ \\ \hline 
$500$         & $0.01241931$ \\
$700$            & $0.01075817$ \\
$1000$               & $0.00806645$\\
$2000$  & $0.00762518$\\
\hline
\end{tabular}
\label{tab:NvsFhatbetaSine}
\end{center}
\end{table}




\end{document}